\theoremstyle{plain}
\newtheorem{theorem}{Theorem}[section]
\newtheorem{lemma}[theorem]{Lemma}
\theoremstyle{definition}
\newtheorem{assumption}[theorem]{Assumption}
\theoremstyle{remark}
\newtheorem{remark}[theorem]{Remark}
\newcommand{\MAB}{\textsc{nsMAB}\xspace} 
\newcommand{\NSMAB}{\textsc{ConFee-nsMAB}\xspace} 
\newcommand{\HQ}{\textsc{HyQue}\xspace}
\newcommand{\BQ}{\textsc{BaQue}\xspace}
\newcommand{\ODQ}{\textsc{OdQue}\xspace}
\title{Constrained Feedback Learning \\for Non-Stationary Multi-Armed Bandits}
\author{%
  Shaoang Li\\
  Stony Brook University\\
  \texttt{shaoang.li@stonybrook.edu} \\
   \And
   Jian Li \\
   Stony Brook University\\
  \texttt{jian.li.3@stonybrook.edu} \\
}
\begin{document}

\maketitle

\begin{abstract}
Non-stationary multi-armed bandits (\MAB) enable agents to adapt to changing environments by incorporating mechanisms to detect and respond to shifts in reward distributions, making them well-suited for dynamic settings. However, existing approaches typically assume that reward feedback is available at every round—an assumption that overlooks many real-world scenarios where feedback is limited. In this paper, we take a significant step forward by introducing a new model of \textit{constrained feedback in non-stationary multi-armed bandits} (\NSMAB), where the availability of reward feedback is restricted. We propose the first prior-free algorithm—that is, one that does not require prior knowledge of the degree of non-stationarity—that achieves near-optimal dynamic regret in this setting. Specifically, our algorithm attains a dynamic regret of $\Tilde{\mathcal{O}}({K^{1/3} V_T^{1/3} T }/{ B^{1/3}})$, where $T$ is the number of rounds, $K$ is the number of arms, $B$ is the query budget, and $V_T$ is the variation budget capturing the degree of non-stationarity.

\end{abstract}
\section{Introduction}
 
The multi-armed bandits (MAB) problem \cite{slivkins2019introduction} is a fundamental framework for decision-making under uncertainty, where an agent selects from a set of arms to maximize cumulative rewards over a time horizon, balancing exploration (learning about arms) and exploitation (leveraging known rewards). Most state-of-the-art MAB algorithms, including Upper Confidence Bound (UCB)-type methods \cite{auer2002finite} and Thompson Sampling (TS)-based approaches \cite{thompson1933likelihood}, assume stationary settings in which reward distributions remain fixed over time. However, many real-world applications—such as dynamic pricing \cite{gallego1994optimal}, evolving user preferences \cite{li2010contextual}, and environmental shifts \cite{gittins1979bandit}—are inherently non-stationary, with reward distributions that change over time. This has led to growing interest in non-stationary MAB (\MAB), which incorporates mechanisms for detecting and adapting to such changes, making it more applicable to dynamic environments.

However, most of the existing literature on \MAB assumes that reward feedback is available to the agent at every round. This overlooks a critical aspect of many real-world applications, where feedback is often limited. For example, in recommendation systems, repeatedly asking users for feedback on the quality of recommendations can lead to user fatigue or annoyance \cite{efroni2021confidence}. Similarly, in reinforcement learning from human feedback (RLHF) \cite{christiano2017deep}, essential signals such as preference comparisons, demonstrations, or ratings are often costly, time-consuming, or constrained by the availability of expert annotators. Motivated by these challenges, we introduce a new model of \textit{constrained feedback in non-stationary multi-armed bandits}, termed \NSMAB, in which the availability of reward feedback is limited.

Similar to existing \MAB frameworks, \NSMAB quantifies the degree of non-stationarity in the environment by tracking changes in reward distributions and assumes that the variation in mean rewards over the time horizon $T$ is bounded by a variation budget $V_T$ \cite{besbes2014stochastic}. However, unlike conventional \MAB, the agent in \NSMAB cannot observe reward feedback at every round. Instead, \textit{the total number of reward feedback queries available to the agent over the time horizon $T$ is limited by a reward query budget $B$}. This constraint provides the agent with significantly less information than in standard \MAB settings, making the \NSMAB problem fundamentally more challenging. Motivated by this, the primary question we seek to address is:
\vspace{-0.12in}
\begin{tcolorbox}[colback=white!5!white,colframe=white!75!white]
\textit{How does the reward feedback querying constraint affect the agent’s decision-making process in \NSMAB, particularly in balancing exploration, exploitation, and query efficiency under limited feedback?} 
\end{tcolorbox}
\vspace{-0.12in}
Despite recent advances in feedback-constrained MAB \cite{efroni2021confidence,merlis2021query} and prior-free\footnote{The algorithm operates without prior knowledge of the degree of non-stationarity, making it more practical and broadly applicable. } non-stationary RL or MAB \cite{conf/colt/WeiL21}, this question remains, to the best of our knowledge, unexplored in the context of constrained feedback learning for \MAB (see Table~\ref{tab:regret_comparison}). In this setting, the agent faces a unique dilemma: balancing exploration and exploitation in non-stationary environments while operating under a strict feedback query constraint. This introduces a new layer of complexity to the already challenging problem of designing prior-free algorithms for \MAB with near-optimal dynamic regret\footnote{In non-stationary settings, the conventional ``static regret'', defined with respect to a single best action, may perform poorly. Instead, ``dynamic regret'' is the appropriate performance metric \cite{besbes2014stochastic}. See Section~\ref{sec:problem_setup} for a detailed definition.} guarantees. This leads us to the second research question:
\vspace{-0.12in}
\begin{tcolorbox}[colback=white!5!white,colframe=white!75!white]
\textit{Is it possible to design prior-free algorithms for \NSMAB that achieve near-optimal dynamic regret guarantees?}
\end{tcolorbox}
\vspace{-0.12in}
Motivated by these open questions, we make the following key contributions:

\begin{table*}[t!]
\centering
\label{tab:regret_comparison}
\scalebox{0.75}{
\begin{tabular}{c|c|c|c|c|c}
\hline
\textbf{Paper} & \textbf{Setting} & \textbf{Variation $V_T$}  & {\textbf{Constrained Feedback}} & \textbf{Upper Bound} & \textbf{Lower Bound}  \\ \hline
\citet{besbes2014stochastic} & Non-Stationary & Known & \ding{55} & $\tilde{\mathcal{O}}(K^{1/3}V_T^{1/3}T^{2/3})$ & $\Omega(K^{1/3}V_T^{1/3}T^{2/3})$ \\ \hline
\citet{conf/colt/WeiL21} & Non-Stationary& Unknown & \ding{55} & $\tilde{\mathcal{O}}(K^{1/3}V_T^{1/3}T^{2/3})$ & $\Omega(K^{1/3}V_T^{1/3}T^{2/3})$ \\ \hline
\citet{efroni2021confidence} & Stationary & --- & \ding{51} & $\tilde{\mathcal{O}}\left( \frac{ K^{1/2} T }{ B^{1/2} } \right)$ & $\Omega\left( \frac{ K^{1/2} T }{ B^{1/2}} \right)$   \\ \hline
\textbf{This Work} & \textbf{Non-Stationary} & \textbf{Unknown}& \ding{51} & $\tilde{\mathcal{O}}\left( \frac{ K^{1/3} V_T^{1/3} T }{ B^{1/3} } \right)$ & $\Omega\left( \frac{ K^{1/3} V_T^{1/3} T }{ B^{1/3} } \right)$  \\ \hline
\end{tabular}}
\caption{Comparison with the most closely related works, where $T$ is the time horizon, $K$ is the number of arms, $V_T$ denotes the variation budget capturing the degree of non-stationarity, and $B$ is the reward feedback querying constraint.
}
\vspace{-0.15in}
\end{table*}

\noindent$\bullet$ \textbf{\NSMAB Framework.} We introduce \NSMAB, a new framework for \MAB that incorporates a reward feedback querying constraint, limiting the total number of reward feedback queries the agent can issue over the time horizon while learning in non-stationary environments.

\noindent$\bullet$ \textbf{Near-Optimal \HQ Algorithm.} A standard approach in \MAB is to run multiple instances of a base algorithm at varying time scales using randomized scheduling to detect non-stationarity \cite{conf/colt/WeiL21}. We extend this multi-scale idea to \NSMAB, but the extension presents two key challenges:

$\quad$ $\vartriangleright$ Query Allocation Trade-off: Naively querying every round quickly exhausts the budget, while querying too infrequently risks missing abrupt changes. Thus, the challenge is to design a query allocation strategy that balances timely detection with budget preservation, including careful distribution across time scales.

$\quad$ $\vartriangleright$ Long Non-Query Segments: Longer instances aimed at capturing gradual shifts may contain extended non-query periods, limiting the algorithm’s responsiveness. It is critical to ensure sufficient query frequency for timely change detection without prematurely exhausting the budget.

To address these challenges, we propose \HQ, a hybrid query allocation algorithm that combines baseline allocation—ensuring a minimum query rate within each block—with an on-demand mechanism that dynamically injects queries when usage falls behind a near-linear pace. \HQ is prior-free and provably near-optimal, achieving a regret upper bound that matches the lower bound up to logarithmic factors. We establish this lower bound to understand how the reward feedback querying constraint impacts the best-achievable dynamic regret in \NSMAB.

\section{Model and Problem Formulation}
\label{sec:problem_setup}

\noindent\textbf{\MAB.} We consider a \MAB setting with $[K] = \{1, \ldots, K\}$ arms, where an agent selects one arm $k\in [K]$ at each round $t \in [T] = \{1, \dots, T\}$. When arm $k$ is pulled, a reward $R_t^k \in [0,1]$ is obtained, where $R_t^k$ is a random variable drawn independently from an \textit{unknown} distribution. Let $\mu_t^k$ denote the expected reward of arm $k$ at round $t$. Let $\mu:=\{\mu_t^k, k\in[K],t \in [T]\}$ denote the underlying sequence of true expected rewards for all arms over the time horizon $T.$ In \MAB, $\mu_t^k$ may vary across rounds. Consistent with conventional \MAB settings \cite{besbes2014stochastic}, while the expected rewards of each arm may change arbitrarily often, the total variation in the expected rewards over $[T]$ is bounded by a variation budget $V_T$, whose value is unknown to the agent. The corresponding non-stationarity set can be defined as $\mathcal{V}=\left\{\mu: \sum_{t=1}^{T-1} \sup_{k\in [K]} \bigl|\mu_{t+1}^k - \mu_{t}^k\bigr|
  \;\;\le\;\;
  V_T\right\}.$

\noindent\textbf{\MAB with Constrained Feedback (\NSMAB).}  
At each round $t$, the agent performs two actions: selects an arm $k \in [K]$ to pull, and decides whether to query the reward feedback 
subject to the budget constraint.
Specifically, we assume that querying reward feedback incurs a unit cost, and define $\mathbb{I}^{\text{query}}_t \in \{0,1\}$ as an indicator variable denoting whether the reward feedback is queried at round $t$. The agent can observe the reward $R_t^{k}$ 
immediately only if $\mathbb{I}^{\text{query}}_t = 1$. The total number of queries over the time horizon $T$ is constrained by a known query budget $B$, i.e., $\sum_{t=1}^{T} \mathbb{I}^{\text{query}}_t \leq B.$
An algorithm $\mathcal{A}$ is considered feasible if it selects an arm $k \in [K]$ at each round $t$ and adheres to the total query budget constraint.
Throughout the paper, we use $\mathcal{S}^{\text{query}} = \{t \in [T] \mid \mathbb{I}^{\text{query}}_t = 1\}$ and $\mathcal{S}^{\text{non-query}} = \{t \in [T] \mid \mathbb{I}^{\text{query}}_t = 0\}$ to denote the sets of query and non-query rounds, respectively.

\noindent\textbf{Dynamic Regret.} Let $\mu_t^* = \max_{k \in [K]} \mu_t^k$ denote the optimal expected reward at round $t$. The performance of the agent is evaluated using the \textit{dynamic regret}. This metric compares the cumulative expected reward of an optimal policy (aware of $\mu_t^*$ at each round) with that accrued from the sequence of arms selected by algorithm $\mathcal{A}$. Formally, it is defined as
\begin{align}\label{eq:dynamic_regret}
    \mathcal{R}_T(\mathcal{A})
    \;=\;
    \sup_{\mu\in\mathcal{V}}
    \left\{
        \sum_{t=1}^{T} \mu_t^*
        \;-\;
        \mathbb{E}\Bigg[\sum_{t=1}^{T} \mu_t^{k}\Bigg]
    \right\},
\end{align}
where the expectation is taken over the randomness in rewards and potentially the internal randomness of the algorithm. The objective of the agent in \NSMAB is to minimize the dynamic regret.  While this definition is similar to those in existing \MAB frameworks, the key distinction is that, in \NSMAB,  the agent cannot always observe the reward feedback at each round. Instead, querying the reward feedback incurs a cost, which is constrained by a budget $B$ over the time horizon $T$ (i.e., $\sum_{t=1}^{T} \mathbb{I}^{\text{query}}_t \leq B.$). These fundamental differences necessitate new techniques for online algorithm design and dynamic regret characterization, which we will discuss in detail in Sections~\ref{sec:alg} and~\ref{sec:regret}, respectively.

\section{Prior-free Algorithm for \NSMAB}\label{sec:alg}

We show that it is possible to design a prior-free algorithm for \NSMAB that achieves near-optimal guarantees on dynamic regret.

\subsection{Motivation and Challenges}\label{sec:motivation}

We begin by outlining the core principles that guide our approach. The primary objective is to \textit{strike a balance between accurately detecting environmental changes and efficiently managing a limited query budget}.

$\bullet$ \textbf{Balancing Change Detection and Query Usage.}
In \MAB, the algorithm must detect environmental changes by tracking shifts in the mean rewards of the arms. However, when the total number of queries is constrained, naive strategies such as querying every round quickly exhaust the query budget. Conversely, querying too infrequently may hinder the algorithm’s ability to detect abrupt changes in a timely manner. Effectively allocating queries across multiple time scales without exceeding the budget $B$ requires a carefully crafted strategy. \textit{\textbf{The central challenge, therefore, is to develop a query allocation mechanism that balances the trade-off between aggressive exploration (to detect changes) and conservative query usage (to preserve the budget).}}

$\bullet$ \textbf{Budget Splitting across Multiple Scales.}
Conventional prior-free \MAB approaches often employ a \emph{multi-scale restart} technique, where multiple instances of a base algorithm (e.g., \texttt{UCB1} \cite{auer2002finite}) are run at exponentially increasing time scales~\cite{conf/colt/ChenLLW19,conf/colt/WeiL21}. This ensures that at least one instance operates at an appropriate granularity to detect changes. \textit{\textbf{As a result, it becomes essential to allocate the query budget across different scales and their corresponding base algorithms}}. This allocation serves a dual purpose:  it ensures that the regret incurred during query segments remains near-optimal—comparable to settings with full feedback—and it supports the accuracy of decisions made in non-query segments, which rely entirely on the reward estimates obtained during query phases.

$\bullet$ \textbf{Balancing Query Frequency.}
For longer instances designed to capture subtle environmental changes, there may be extended non-query segments. When the agent encounters a prolonged sequence of non-query rounds, it may be unable to verify whether the environment has shifted, leading to potentially significant regret if the chosen arm becomes suboptimal due to an undetected change. Consequently, non-query gaps cannot be arbitrarily long without degrading performance. The algorithm must query frequently enough to detect changes in a timely manner, while ensuring that the query frequency is not so high as to prematurely exhaust the total query budget $B$.

\begin{figure*}[t]
\centering

\begin{tikzpicture}[scale=0.75]

\draw[thin] (0,3) -- (16,3);
\foreach \x in {0,1,2,3,4,5,6,7,8,9,10,11,12,13,14,15,16}
    \draw[thin] (\x,3.1) -- (\x,2.9);

\node[left, red] at (-0.5,2.25) {\textbf{m=3}};
\draw[line width=2pt, red] (0,2.25) -- (1,2.25); 
\draw[line width=2pt, red, dash pattern=on 2pt off 2pt on 2pt off 2pt] (1,2.25) -- (2,2.25); 
\node[above, red] at (1,2.35) {\textit{active}};
\draw[line width=0.8pt, red] (2,2.25) -- (16,2.25); 
\node[above, red] at (9,2.35) {\textit{inactive}};

\node[left, green!70!black] at (-0.5,1.5) {\textbf{m=2}};
\draw[line width=2pt, green!70!black] (8,1.5) -- (10,1.5);
\node[above, green!70!black] at (9,1.6) {\textit{query}};
\draw[line width=0.8pt, green!70!black] (10,1.5) -- (14,1.5); 
\draw[line width=2pt, green!70!black, dash pattern=on 2pt off 2pt on 2pt off 2pt] (14,1.5) -- (16,1.5); 
\node[above, green!70!black] at (15,1.6) {\textit{non-query}};

\node[left, blue!60!green] at (-0.5,0.75) {\textbf{m=1}};
\draw[line width=2pt, blue!60!green] (4,0.75) -- (6,0.75); 
\draw[line width=2pt, blue!60!green, dash pattern=on 2pt off 2pt on 2pt off 2pt] (6,0.75) -- (8,0.75); 

\node[left, blue!70!red] at (-0.5,0) {\textbf{m=0}};
\draw[line width=2pt, blue!70!red] (2,0) -- (3,0);
\draw[line width=2pt, blue!70!red, dash pattern=on 2pt off 2pt on 2pt off 2pt] (3,0) -- (4,0); 

\draw[line width=2pt, blue!70!red] (10,0) -- (11,0);
\draw[line width=2pt, blue!70!red, dash pattern=on 2pt off 2pt on 2pt off 2pt] (11,0) -- (12,0); 

\draw[line width=2pt, blue!70!red] (12,0) -- (13,0);
\draw[line width=2pt, blue!70!red, dash pattern=on 2pt off 2pt on 2pt off 2pt] (13,0) -- (14,0);

\end{tikzpicture}
\vspace{-0.05in}
\caption{An illustration of a single block in \BQ with $n=3$ and $b=2$ is shown. The figure spans a total of 16 rounds ($2^3 \cdot 2$) and highlights the hierarchical structure of the procedure across four scales ($m=0, 1, 2, 3$), where each scale alternates between active and inactive states. The illustration includes $1$ instance at $m=3$, $1$ instance at $m=2$, $1$ instance at $m=1$, and $3$ instances at $m=0$. 
Bold solid lines indicate active periods, while thin solid lines represent inactive periods. Active periods are further divided into two segments---continuous and dotted sections---to represent distinct querying and non-querying rounds.
}
\vspace{-0.1in}
\label{fig:Multi-Scale}
\end{figure*}
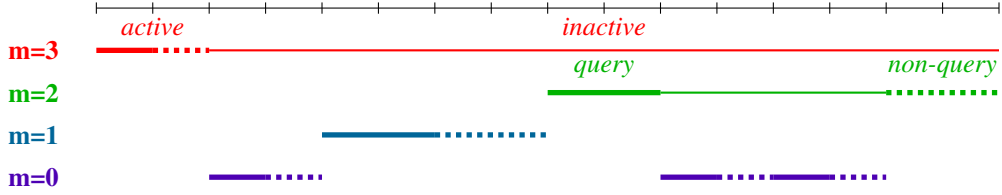

\vspace{-0.05in}
\subsection{\HQ Algorithm}

\textbf{Overview.} To address these challenges, we propose \HQ, a \emph{hybrid query allocation} algorithm for \NSMAB. \HQ partitions the time horizon into blocks of geometrically increasing lengths, within which multiple parallel instances of a base algorithm (e.g., \texttt{UCB1}) operate at different temporal scales. Small-scale instances are more responsive to rapid changes and emphasize exploration, while large-scale instances target slower trends and focus on exploitation. Since the block length upper-bounds the instance duration, this structure enables systematic query scheduling that balances adaptability to non-stationarity, effective exploration, and budget conservation.

At the block level, \HQ invokes a \emph{baseline query allocation} subroutine (\BQ) (Section~\ref{sec:baseline}, which guarantees a minimal share of queries across scales to ensure robust change detection. On top of this, \HQ incorporates an \emph{on-demand query allocation} mechanism (Section~\ref{sec:hq2}) that continuously tracks cumulative query usage and injects additional queries when usage falls behind a near-linear pacing schedule. This two-tiered design offers worst-case robustness within each block while adaptively adjusting to smoother environments, achieving a principled trade-off between stability and flexibility in query allocation. We begin by detailing the \BQ subroutine.

\vspace{-0.05in}
\subsubsection{\textbf{ The \BQ Subroutine}}\label{sec:baseline}

To ensure that each time-scale instance receives sufficient queries for detecting abrupt changes, we \textit{pre-allocate} a baseline portion of the query budget. Instead of distributing queries across the entire time horizon, allocation is performed at the block level, covering instances operating at different scales. We guarantee that within the active rounds of each instance, approximately a $\frac{B}{2T}$fraction is designated for query rounds.

Although this proportion may appear small, it ensures that no instance is deprived of queries, thereby preserving \HQ’s ability to adapt to frequent or significant environmental changes. Typically, each instance begins with an initial \emph{one-shot query batch}, in which the algorithm uses its baseline query quota to collect updated reward observations (Line 10 of Algorithm~\ref{alg:baseline_allocation}). This initial query batch is crucial for enabling reliable change detection at the corresponding scale.

By distributing queries in this manner, each scale is guaranteed a sufficient number of queries over the time horizon, ensuring its ability to detect significant changes (Lemma~\ref{lemma:stability_per_phase}). Moreover, for long-duration instances with extended non-query periods, the probabilistic creation of shorter-scale instances—each beginning with its own query batch—naturally segments the timeline into overlapping sub-intervals. These overlaps help prevent prolonged periods without feedback, as such uninterrupted non-query spans are unlikely to persist with high probability (Lemma~\ref{lemma:bounded_nonquery}).

\begin{algorithm}[h]
\caption{\BQ: Baseline Query Allocation}
\label{alg:baseline_allocation}
\begin{algorithmic}[1] 
\Require Query budget ratio $b = \lceil 2T / B \rceil$, time-scale parameter $n$.
\For {$ \tau = 0, b - 1, 2b - 1, \dots, 2^n \cdot b - 1 $}
    \For {$ m = n, n-1, \dots, 0 $}
        \If {$ \tau $ is a multiple of $ b \cdot 2^m $}
            \State With probability $2^{\frac{m-n}{2}}$, initiate a new instance $\mathcal{I}_{n,m,\tau}$ spanning rounds $[\tau+1,\tau + b \cdot 2^m]$;
        \EndIf
    \EndFor
\EndFor
\For{each instance $\mathcal{I}_{n,m,\tau}$}
    \State Let $\mathcal{S}_{n,m,\tau}$ be its \emph{active} rounds;
    \State \textbf{Query batch}: For the first $\max(1, \lfloor |\mathcal{S}_{n,m,\tau}|/b \rfloor)$ active rounds,
            run \texttt{UCB1} (collect reward and update the UCB index, denoted as $\tilde{g}_t$);
    \State \textbf{Non-query batch}: In the remaining active rounds,
            pick arms according to their frequencies from query batch
            (no reward feedback).
\EndFor
\end{algorithmic}
\end{algorithm}

Algorithm~\ref{alg:baseline_allocation} constructs a \emph{block} of length $b \cdot 2^n$ rounds and deploys multiple instances of \texttt{UCB1} across various time scales indexed by $m = 0, 1, \dots, n$. As illustrated in Fig.~\ref{fig:Multi-Scale}, these instances operate at different granularities, resulting in overlapping activity that alternates between query and non-query rounds. The multi-scale instance setup (Lines 1–7) proceeds by iterating through the timeline in steps of $b$, and at each time $\tau$, a new instance $\mathcal{I}_{n,m,\tau}$ is probabilistically initiated if $\tau$ is divisible by $b \cdot 2^m$.
This randomized activation ensures coverage across both short- and long-term scales, allowing the algorithm to adapt to diverse patterns of non-stationarity. Once initiated (Line 4), an instance $\mathcal{I}_{n,m,\tau}$ covers the nominal time span $[\tau+1, \tau + b \cdot 2^m]$. The set of rounds where the instance is truly \emph{active}, denoted $\mathcal{S}_{n,m,\tau}$, is determined through a hierarchical masking mechanism: a round $t$ within the nominal span is included in $\mathcal{S}_{n,m,\tau}$ if and only if it is not covered by any initiated instance operating at a finer scale (i.e., with $m’ < m$). This ensures that finer-scale instances take precedence over coarser ones in shared intervals.

The baseline query allocation phase (Lines 8–12) then operates on these active rounds. For each instance $\mathcal{I}_{n,m,\tau}$, its (potentially non-contiguous) active rounds $\mathcal{S}_{n,m,\tau}$ are split into two parts (Lines 10–12). The first is a \emph{query batch} of size $\max(1, \lfloor |\mathcal{S}_{n,m,\tau}|/b \rfloor)$, during which \texttt{UCB1} is actively invoked and reward feedback is collected. The second is a \emph{non-query batch}, comprising the remaining rounds, where actions are selected according to the empirical frequency distribution of arms from the query batch. For example, if a particular arm was selected in 20 out of 50 query rounds, it will be selected in the non-query phase with probability 0.4—without incurring additional feedback cost.

\subsubsection{\textbf{The \HQ Framework}}\label{sec:hq2}
Although \BQ ensures robust performance in worst-case scenarios, it can be overly conservative when changes in the environment are infrequent or relatively mild. To better leverage such settings, we introduce \HQ (Algorithm~\ref{alg:on_demand_allocation}), which augments \BQ with an \emph{on-demand query allocation mechanism}. 

Intuitively, when the on-demand component detects that the cumulative number of queries used so far falls short of a target proportion—specifically, $\tfrac{t}{T} \cdot B$ at time $t$—it allocates additional query rounds at the current scale to refine reward estimates.
This mechanism promotes \emph{near-linear pacing} of query usage over time, ensuring that total query consumption closely tracks the ideal rate of $\frac{B}{T}$, thus avoiding both excessive querying and underutilization. To account for stochastic variability, a buffer term $\min\{T/\sqrt{B}, 2^n, T - t\}$ is subtracted from the threshold. This guards against over-querying in short segments and prevents abrupt spikes in query activity (see Remark~\ref{remark:bounded_query} and Lemma~\ref{lemma:query_budget_feasibility}).
As a result, \HQ is able to opportunistically allocate more queries during stable periods, enabling finer reward estimation and lower regret in environments that do not change drastically.

\begin{algorithm}[h!]
\caption{\HQ: Hybrid Query Allocation}
\label{alg:on_demand_allocation}
\begin{algorithmic}[1]
\State \textbf{Initialize:} current round $t \leftarrow 1$, used queries $B' \leftarrow 0$.
\For{$ n = 0, 1, \dots $}
    \State Set $ t_n \gets t $ and initialize \BQ for block $ [t_n, t_n + b \cdot 2^n - 1] $ with the time-scale parameter $n$;
    \While{$ t < t_n + b \cdot 2^n $}
        \If{current instance has queries}
            \State Receive the UCB index $\tilde{g}_t$ and the selected arm from \BQ, play the arm, then increment $t$ and $B'$;
            \State Update the \BQ instance with the observed feedback;
            \State Perform environmental change tests. If any test fails, restart a new phase from \emph{Line 2};
        \Else
            \If{$B' < \frac{t \cdot B}{T} - \min\{T/\sqrt{B}, 2^n, T - t\}$}
                \State Convert the current non-query round into a query round and jump to \emph{Line 6};
            \EndIf
            \State No feedback requested; and set $t \gets t+1$.
        \EndIf
    \EndWhile
\EndFor
\end{algorithmic}
\end{algorithm}

\HQ partitions the time horizon into multiple phases, each initiated upon detecting a change in the environment. Within each phase, time is further divided into consecutive blocks of length $b \cdot 2^n$ rounds. At the beginning of each block, \HQ invokes \BQ to initialize multi-scale instances and records the block’s start time as $t_n$, so the block spans the interval $[t_n, t_n + b \cdot 2^n - 1]$.

During each block, \HQ actively monitors two key aspects to determine whether to restart the phase or allocate additional queries.

(1) Change Detection (Lines 5–9 of Algorithm~\ref{alg:on_demand_allocation}):
For a given instance $\mathcal{I}_{n,m,\tau}$, define $\mathcal{S}_{n,m,t} = \{t'\mid t_n \leq t' \leq t; t' \in \mathcal{S}^{\text{query}}\}$ as the set of its query rounds up to time $t$. Let $U_t = \min_{t' \in \mathcal{S}_{n,m,t}} \tilde{g}_{t'}$ be the minimum estimated reward, and define the confidence bound
$\hat{\rho}(t) = 6(\log T + 1) \log\left(\frac{T}{\delta}\right) \left(\sqrt{\frac{K \log t}{t}} + \frac{K}{t}\right).$
Let \texttt{start} and \texttt{end} denote the first and last query rounds of the instance, respectively. Two tests are performed: (i) If $t = \texttt{end}$ for some order-$m$ instance and $\frac{1}{2^m} \sum_{t' \in \mathcal{S}_{n,m,\tau}} R_{t'} \geq U_t + 9\hat{\rho}(2^m)$, the test returns \texttt{fail}; and (ii) If the average discrepancy satisfies $\frac{1}{|\mathcal{S}_{n,m,t}|} \sum_{t' \in \mathcal{S}_{n,m,t}} (\tilde{g}_{t'} - R_{t'}) \geq 3\hat{\rho}(|\mathcal{S}_{n,m,t}|)$, the test also returns \texttt{fail}.

(2) On-Demand Query Activation (Lines 10–12):
Let $B'$ be the total number of queries used up to round $t$. If $B'$ falls significantly below the expected usage $\tfrac{t \cdot B}{T}$—adjusted by a buffer term $\min\left\{\tfrac{T}{\sqrt{B}},2^n, T - t\right\}$—then \HQ allocates an additional query to the active instance. This buffer accounts for: (i) $\tfrac{T}{\sqrt{B}}$: the maximum consecutive query rounds needed to ensure worst-case performance (Remark~\ref{remark:bounded_query}); (ii) $2^n$: the maximum size of any single query batch in the current block; and (iii) $T - t$: the remaining number of rounds, ensuring budget feasibility.

\section{Main Theoretical Result}

In this section, we bound the regret of the \HQ Algorithm.

\begin{theorem}
\label{thm:regret_bound_A2B-NS} For \NSMAB with a query budget $B = \omega(T^{3/4})$ and an unknown variation satisfying $V_T \in [K^{-1}, K^{-1}\sqrt{B}]$, our \HQ utilizes at most $B$ queries, and its regret is bounded with high  probability as follows:
\begin{equation}
    \mathcal{R}_T(\HQ) \leq \tilde{\mathcal{O}}\left( \frac{K^{1/3} V_T^{1/3} T}{B^{1/3}} \right).
\end{equation}
\end{theorem}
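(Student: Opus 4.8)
The plan is to carry out the whole argument on a high-probability \emph{clean event} $\mathcal{E}$ on which, simultaneously over all $\le T^2$ pairs (instance, round), every empirical reward estimate produced in a query batch lies within its confidence radius $\hat\rho(\cdot)$; a Hoeffding/Azuma bound plus a union bound gives $\Pr[\mathcal{E}]\ge 1-\mathrm{poly}(\delta)$. On $\mathcal{E}$ I may invoke the structural facts already isolated in the paper: (i) no change test fires inside a stretch over which the environment is stable relative to the active scale, while a genuine shift that is large relative to scale $m$ is flagged by some order-$\le m$ test within $\tilde{\mathcal{O}}(b\cdot 2^m)$ rounds (the stability and detection guarantees behind Lemma~\ref{lemma:stability_per_phase}), so there are no spurious restarts and detection delay is bounded; and (ii) the probabilistic spawning of fine-scale instances, each launched with its own query batch, keeps every maximal run of non-query rounds of length $\tilde{\mathcal{O}}(T/\sqrt{B})$ (Lemma~\ref{lemma:bounded_nonquery}), exactly the quantity against which the on-demand buffer $\min\{T/\sqrt{B},2^n,T-t\}$ is calibrated. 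I first dispatch the budget claim: \BQ spends a $\approx B/(2T)$ fraction of the active rounds of every block, hence $\le B/2+o(B)$ queries overall, while the on-demand rule converts a non-query round only when cumulative usage $B'$ lies below $tB/T$ minus the buffer, so $B'$ never overshoots $tB/T$ by more than the buffer and in particular $B'\le B$ at $t=T$ (Lemma~\ref{lemma:query_budget_feasibility} and Remark~\ref{remark:bounded_query}).

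The core is a regret decomposition against a \emph{virtual} partition of $[T]$ into consecutive intervals $J_1,\dots,J_L$ of common length $H^\star=\Theta(T\,K^{1/3}B^{-1/3}V_T^{-2/3})$ — a device used only in the analysis, since $V_T$ is unknown to \HQ, and one checks that $B=\omega(T^{3/4})$ and $V_T\in[K^{-1},K^{-1}\sqrt{B}]$ place $H^\star$ in the admissible range $[b,T]$. Writing $V(J)$ for the variation restricted to $J$ and bounding the gap between the per-round optimum on $J$ and the best fixed arm on $J$ by $2|J|V(J)$,
\[
\mathcal{R}_T(\HQ)\ \le\ \sum_{i=1}^{L}\Big(\mathcal{R}^{\mathrm{static}}(J_i)+2|J_i|V(J_i)\Big),
\]
where $\mathcal{R}^{\mathrm{static}}(J_i)$ is the regret of \HQ on $J_i$ against a fixed arm. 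The heart of the proof is the claim $\mathcal{R}^{\mathrm{static}}(J_i)\le\tilde{\mathcal{O}}(\sqrt{KT|J_i|/B})$, obtained by decomposing $J_i$ along the blocks and multi-scale instances of \HQ that overlap it. An order-$m$ instance contributes: (a) a query-batch term — \texttt{UCB1} run for $s\approx 2^m$ query rounds has static regret $\tilde{\mathcal{O}}(\sqrt{Ks})$, and, crucially, the empirical play frequencies it leaves behind carry average suboptimality gap $\tilde{\mathcal{O}}(\sqrt{K/s})$; and (b) a non-query-batch term — replaying those frequencies for the remaining $\approx b\cdot 2^m$ rounds of the instance costs $\tilde{\mathcal{O}}(b\sqrt{K\,2^m})$. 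Since $s\approx 2^m$ for an instance of nominal span $b\,2^m$ and $b=\lceil 2T/B\rceil$, summing the geometric series over scales $m=0,\dots,n$ and over the (in expectation $2^{(n-m)/2}$) instances per scale yields a per-phase bound of $\tilde{\mathcal{O}}(\sqrt{K\,b\,\ell})=\tilde{\mathcal{O}}(\sqrt{KT\ell/B})$ for a phase of length $\ell$; concatenating the phases inside $J_i$ and applying concavity of $\sqrt{\cdot}$ gives $\tilde{\mathcal{O}}(\sqrt{KT|J_i|/B})$, once the number of restarts inside $J_i$ is controlled.

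Two loose ends must be tied, and this is where the real work lies. First, the "gap $\tilde{\mathcal{O}}(\sqrt{K/s})$'' claim presupposes approximate stationarity over an instance's active rounds; when that fails, Lemma~\ref{lemma:stability_per_phase} forces the corresponding test to fire and the instance (and phase) to be truncated, and the excess regret over the $\tilde{\mathcal{O}}(b\,2^m)$ rounds before detection is $\tilde{\mathcal{O}}(b\,2^m\Delta)$ for a shift of magnitude $\Delta$. Detection at scale $m$ forces $\Delta\gtrsim\hat\rho(2^m)\asymp\sqrt{K/2^m}$, so $b\,2^m\Delta\asymp b\sqrt{K\,2^m}\lesssim\sqrt{KT\ell/B}$ is already of the per-phase order, and — via $V(J_i)\ge\Delta$ and the calibration of $H^\star$ — each restart is charged either to its own phase's static term or to $2|J_i|V(J_i)$; shifts too small to be detected contribute only their own variation to $2|J_i|V(J_i)$. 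Second, instances and phases straddling the endpoints of $J_i$ cost only their per-round rate $\tilde{\mathcal{O}}(\sqrt{K/2^m})$ over $|J_i|$ rounds, itself $\le\tilde{\mathcal{O}}(\sqrt{KT|J_i|/B})$, so boundary effects add at most $\tilde{\mathcal{O}}(\sqrt{KT|J_i|/B})$ per interval. Summing the two contributions and using Cauchy--Schwarz, $\sum_i\sqrt{|J_i|}\le\sqrt{L\,T}=T/\sqrt{H^\star}$,
\[
\mathcal{R}_T(\HQ)\ \le\ \tilde{\mathcal{O}}\!\Big(T\sqrt{\tfrac{KT}{B\,H^\star}}\Big)+2H^\star V_T ,
\]
and substituting $H^\star=\Theta(TK^{1/3}B^{-1/3}V_T^{-2/3})$ makes both terms $\tilde{\mathcal{O}}(K^{1/3}V_T^{1/3}T/B^{1/3})$; together with the budget claim of the first paragraph this proves the theorem.

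I expect the main obstacle to be precisely this static-regret step: the clean coupling between \HQ's data-dependent phase/block/instance structure and the fixed virtual partition, the bookkeeping showing that the $\times b$ blow-up from replaying frequencies in non-query batches is the \emph{only} loss relative to full feedback and does not compound across scales, and — most delicately — the restart-counting argument that lets detection-delay regret be absorbed into $2|J_i|V(J_i)$ once $H^\star$ is fixed. Getting the constants in the buffer $\min\{T/\sqrt{B},2^n,T-t\}$ and in $\hat\rho$ to line up at that point, and verifying that the hypotheses $B=\omega(T^{3/4})$ and $V_T\in[K^{-1},K^{-1}\sqrt{B}]$ are exactly what make the balance legitimate (e.g.\ that $H^\star\in[b,T]$ and that per-non-query-gap regret $\tilde{\mathcal{O}}(T/\sqrt{B})$ stays below the target), is the fiddly remainder.
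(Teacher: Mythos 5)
Your route is genuinely different from the paper's. The paper never introduces a virtual partition of $[T]$: it decomposes the regret by query versus non-query rounds into $\mathcal{R}^{\text{query}}_T+\mathcal{R}^{\text{error}}_T+\mathcal{R}^{\text{drift}}_T$ (Lemma~\ref{lem:regret_decomposition}), bounds the query-round regret by running the \texttt{MASTER}-style multi-scale argument on the query subsequence alone, which has effective horizon $B$ and yields $\tilde{\mathcal{O}}(K^{1/3}V_T^{1/3}B^{2/3})$ (Lemma~\ref{lem:bound_query}), and then propagates this to non-query rounds through the replay ratio $2T/B$ and the $T/\sqrt{B}$ bound on non-query stretches (Lemmas~\ref{lemma:bounded_nonquery}, \ref{lem:bound_error}, \ref{lem:bound_drift}), finally noting that on-demand conversions affect at most $B/2$ rounds and cannot change the order. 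Your budget-feasibility argument, the multi-scale bookkeeping $\sum_m 2^{(n-m)/2}\cdot b\sqrt{K2^m}=\tilde{\mathcal{O}}(\sqrt{Kb\cdot b2^n})$, and the final optimization over $H^\star$ are all consistent with the paper's quantities, and the interval-plus-linearization scheme you propose is exactly the one the paper does use — but only for the \emph{known}-$V_T$ algorithm Rexp3B in Appendix~\ref{app:known}.

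The gap is your central claim $\mathcal{R}^{\mathrm{static}}(J_i)\le\tilde{\mathcal{O}}(\sqrt{KT|J_i|/B})$ for fixed-length intervals $J_i$ with arbitrary internal variation. For Rexp3B this step is free because EXP3 carries an adversarial best-fixed-arm guarantee per batch regardless of within-batch variation; \HQ's instances are \texttt{UCB1}-based with data-dependent change tests and restarts, and \texttt{UCB1} has no adversarial static-regret guarantee. When the variation inside a single $J_i$ is large, the interval can contain many phases, and your own concavity step degrades by a $\sqrt{N_i}$ factor in the number of phases; absorbing both this and the detection-delay regret into $2|J_i|V(J_i)$ is precisely the restart-counting/charging argument of the \texttt{MASTER} analysis, which you flag as the ``main obstacle'' but do not carry out. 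Note also that in Wei--Luo the interval partition is \emph{adapted} to the variation (each interval's variation is kept below the detection scale) exactly so that per-interval regret and restart counts are controllable; it is not clear your fixed-length partition of size $H^\star$ admits the same charging without reworking that machinery, and the constants in $\hat\rho$ and the buffer do not by themselves rescue it. So the proposal is a plausible blueprint whose core step amounts to re-proving the \texttt{MASTER} interval analysis under the replay factor, whereas the paper deliberately avoids this by confining the \texttt{MASTER}-type argument to the query subsequence (cited with horizon $B$) and handling all non-query rounds through the error/drift decomposition.
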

\vspace{-0.05in}
Ignoring logarithmic factors, this upper matches the lower bound in Theorem~\ref{thm:lower_bound_non_stationary_budgeted_feedback} in terms of its dependence on $K$, $V_T$, $T$, and $B$, thereby establishing the near-optimality of \HQ.
When $B = \Omega(T)$, the regret upper bound simplifies to $\Tilde{O} \left( K^{1/3} V_T^{1/3} T^{2/3} \right)$, recovering the best-known result for \MAB \citep{besbes2014stochastic}.

As discussed earlier (Section~\ref{sec:motivation}), our \HQ must balance environmental change detection with query allocation due to the presence of the query constraint in \NSMAB. This key distinction renders existing regret analysis for \MAB \cite{besbes2014stochastic,conf/colt/WeiL21} not directly applicable, and necessitates the development of new proof techniques tailored to constrained feedback settings. 
First, establishing feasibility (Section~\ref{sec:feasibility}) requires a careful analysis of query usage, made more challenging by the inherent uncertainty in the non-stationary environment. Second, we introduce a novel regret decomposition (Section~\ref{sec:regret_decomposition}) that enables us to isolate and address the unique structural challenges posed by limited feedback, allowing separate treatment of regret incurred during query and non-query rounds. Third, building upon this decomposition, we develop fine-grained analytical techniques (Section~\ref{sec:boundingregret}) to rigorously bound the regret contributed by non-query rounds—an essential step in accurately characterizing performance under constrained feedback.
An overview of our theoretical analysis is presented in Section~\ref{sec:regret}, with the complete proof detailed in Appendix~\ref{app:regret}.

\begin{remark}
If $V_T$ is \emph{known} in \NSMAB, a near-optimal upper bound can similarly be achieved, and the algorithm becomes significantly simpler.  A detailed description of this algorithm, along with a rigorous analysis of its theoretical regret bound, is provided in Appendix \ref{app:known}.
\end{remark}

\begin{remark}
When $B = T$, allowing queries at every round, \HQ's behavior on query rounds aligns with the principles of the \texttt{MASTER} algorithm \cite{conf/colt/WeiL21}, the only known prior-free solution for non-stationary RL. As a result, the query allocation component of \HQ primarily affects the regret incurred during the non-query segments. 
\end{remark}

\begin{figure*}[t!]
\centering
\includegraphics[width=0.85\textwidth]{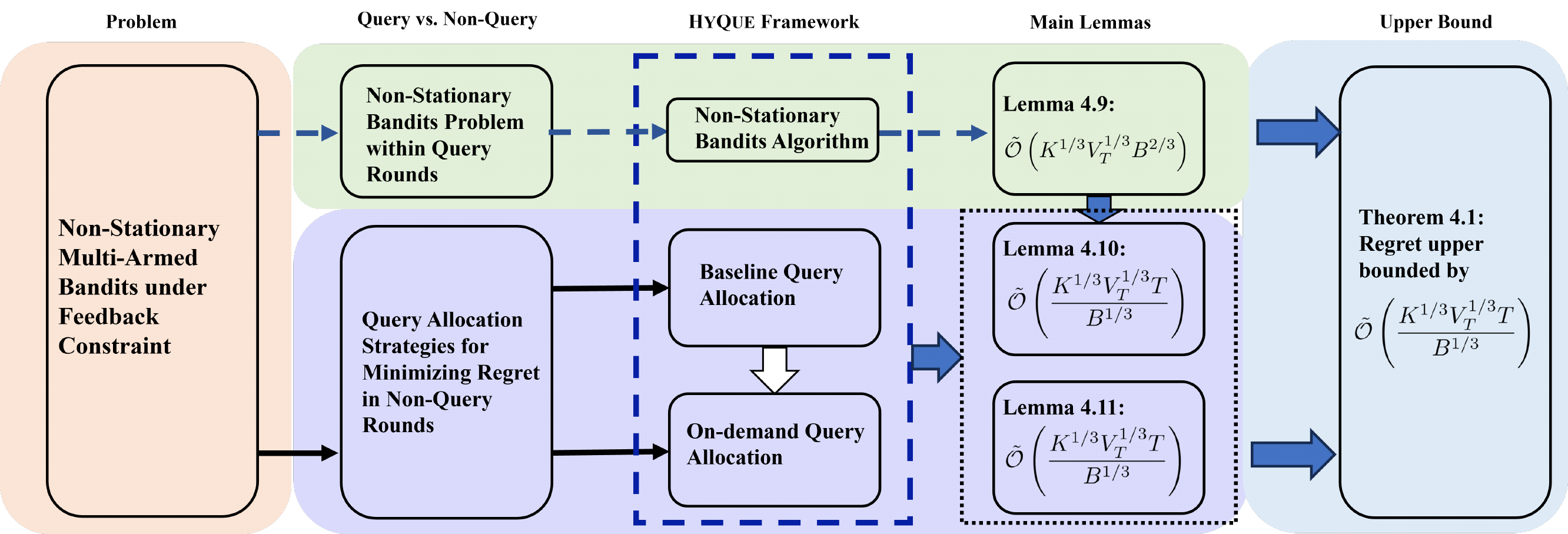}
\caption{
The workflow of \HQ for \NSMAB and its corresponding regret analysis. 
The top (green) box addresses the \MAB problem during query rounds, which leads to the regret bound established in Lemma~\ref{lem:bound_query}.  
The bottom (purple) box describes the query allocation strategies within \HQ, where \BQ operates as a subroutine. Together, they aim to minimize regret during non-query rounds, as analyzed in Lemmas~\ref{lem:bound_error} and~\ref{lem:bound_drift}.  
By combining these results, Theorem~\ref{thm:regret_bound_A2B-NS} provides the overall regret bound for our \HQ algorithm.
}
\vspace{-0.15in}
\label{fig:Flowchart}
\end{figure*}

\vspace{-0.05in}
\subsection{Proof Sketch}
\label{sec:regret}
We provide an overview of the theoretical analysis for the regret upper bound for \HQ.

\subsubsection{Feasibility of \HQ}\label{sec:feasibility}

\begin{lemma}[Bound on Consecutive Non-query Rounds]
\label{lemma:bounded_nonquery}
For some universal constant $C > 0$, the probability that \BQ experiences more than $T / \sqrt{B}$
consecutive non-query rounds in that block is at most $\exp\left(-C \frac{B^{3/4}}{\sqrt{T}}\right)$.
\end{lemma}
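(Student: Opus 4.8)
The plan is to bound the probability that a window of $W := \lceil T/\sqrt{B}\rceil$ consecutive rounds inside a block contains no query round, and then take a union bound over the $O(b\cdot 2^n)$ possible starting positions of such a window. The key observation is that \BQ guarantees a query batch of size $\max(1,\lfloor|\mathcal{S}_{n,m,\tau}|/b\rfloor)$ at the \emph{start} of every instance that is actually initiated, so a long non-query gap can only occur if, over a long stretch of the timeline, no sufficiently small-scale instance is initiated. I would first make precise the ``coverage'' structure: for the window $[s, s+W-1]$ to be entirely non-query, every order-$0$ instance slot $\tau$ with $\tau \equiv -1 \pmod b$ that lands in (a slightly enlarged version of) this window must have failed its probabilistic activation, and likewise the order-$m$ instances whose query batches would intersect the window must not have been the ``active'' covering instance — but since finer scales take precedence, it suffices to control the order-$0$ activations, each of which fires independently with probability $2^{(0-n)/2} = 2^{-n/2}$.

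Concretely, the number of candidate order-$0$ slots inside a window of length $W$ is at least $\lfloor W/b\rfloor \ge \lfloor (T/\sqrt B)/(2T/B)\rfloor = \Theta(\sqrt B)$ (using $b = \lceil 2T/B\rceil \le 4T/B$ for $B\le T$, say, or the appropriate constant in the regime $B=\omega(T^{3/4})$). Each such slot, when activated, produces a query batch of size $\ge 1$ within its own span $[\tau+1,\tau+b]$, and since $b \le W$ these batches lie inside the enlarged window; hence the window is non-query only if \emph{all} $\Theta(\sqrt B)$ of these independent Bernoulli$(2^{-n/2})$ trials come up $0$. That event has probability at most $(1-2^{-n/2})^{\Theta(\sqrt B)} \le \exp\big(-\Theta(2^{-n/2}\sqrt B)\big)$. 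Within a single block the time-scale parameter is $n$ with $b\cdot 2^n$ the block length, and the relevant block lengths are $O(T)$, so $2^{n} = O(T/b) = O(B)$, giving $2^{-n/2} = \Omega(1/\sqrt B)$ in the worst relevant block — wait, that would only give $\exp(-\Theta(1))$, so I need to be more careful: the dangerous non-query stretches occur \emph{inside} long instances at scale $m$ close to $n$, and there the precedence rule means I should count activations of the \emph{next finer scale} $m-1$ (firing with probability $2^{-1/2}$ relative to scale $m$), or more simply argue scale-by-scale that within any order-$m$ active segment the order-$(m-1)$ sub-slots fire with constant probability each, and there are $\Theta(W/(b\cdot 2^{m-1}))$ of them; summing the geometric-like contributions and optimizing, the binding case yields the stated $\exp(-C B^{3/4}/\sqrt T)$. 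Finally I multiply by the union-bound factor $O(b\cdot 2^n) = \mathrm{poly}(T)$, which is absorbed into the exponent by adjusting $C$ (this is why $B = \omega(T^{3/4})$ is assumed — it makes $B^{3/4}/\sqrt T = \omega(\log T)$, so the polynomial union-bound factor is dominated).

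The main obstacle I anticipate is getting the bookkeeping of the hierarchical masking right: a round is ``active'' for instance $\mathcal{I}_{n,m,\tau}$ only if no finer-scale instance covers it, so a long non-query run could in principle be stitched together from the non-query batches of several consecutive finer-scale instances that all happened to place their (small) query batches at the \emph{far left} of their spans, leaving long non-query tails that abut. Handling this requires showing that either (a) enough finer-scale instances were activated that their query batches are densely spread, contradicting a length-$W$ gap, or (b) few were activated, which is itself an exponentially unlikely event; the quantitative trade-off between these two cases, optimized over the scale $m$ at which the gap sits, is exactly what produces the exponent $B^{3/4}/\sqrt T$ rather than something weaker, and is the step I would write out most carefully. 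The remaining pieces — independence of activations across distinct slots, the size-$\ge 1$ guarantee on each query batch, and the union bound — are routine.
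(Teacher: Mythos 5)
Your framework matches the paper's: count independent activation opportunities inside a window of length $W=\lceil T/\sqrt{B}\rceil$, note that any initiated instance begins with a query batch of size at least one, bound the probability that all activations fail, and absorb the union bound over window positions into the constant because $B=\omega(T^{3/4})$ makes $B^{3/4}/\sqrt{T}=\omega(\log T)$. You also correctly locate the difficulty: with the literal activation probabilities $2^{(m-n)/2}$ of Algorithm~\ref{alg:baseline_allocation}, a block of length $b\cdot 2^n=\Theta(T)$ has $2^{-n/2}=\Theta(1/\sqrt{B})$, so the $\Theta(\sqrt{B})$ order-$0$ slots in the window contribute only $\Theta(1)$ expected activations (and summing over all scales $m$ only multiplies this by the geometric constant $\sum_m 2^{-m/2}\le 2+\sqrt{2}$), giving a failure probability of $\exp(-\Theta(1))$ rather than $\exp(-CB^{3/4}/\sqrt{T})$.

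However, the step you propose to close this gap does not work, and this is exactly the step that carries the result. You argue that within an active segment of a scale-$m$ instance the scale-$(m-1)$ sub-slots ``fire with constant probability each,'' and that optimizing over $m$ yields the stated exponent. Nothing in \BQ{} conditions the activation of finer-scale instances on coarser ones: a scale-$(m-1)$ instance is initiated with probability $2^{(m-1-n)/2}$ regardless of which coarser instance is currently active, so for the largest blocks every scale's contribution to the expected number of activations in a $T/\sqrt{B}$ window is still at most $\frac{W}{b\,2^{m-1}}\,2^{(m-1-n)/2}\le \frac{\sqrt{B}}{2}\,2^{-n/2}=O(1)$, and no choice of $m$ or summation over scales recovers $B^{3/4}/\sqrt{T}$. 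The paper itself only reaches that exponent by replacing the block parameter $n$ with an ``effective maximum scale'' $M=\lfloor\log_2(T/\sqrt{B})\rfloor$ and modeling the initiation probabilities as $2^{(m-M)/2}$, under which the dominant $m=0$ term gives $\approx \frac{\sqrt{B}}{2}\cdot\frac{B^{1/4}}{\sqrt{T}}=\frac{B^{3/4}}{2\sqrt{T}}$; your proposal neither reproduces this reparameterization nor supplies a valid substitute, so the exponent $\exp(-CB^{3/4}/\sqrt{T})$ is asserted rather than derived. (Your case (a)/(b) dichotomy about stitched-together non-query tails is a reasonable thing to worry about, but it too would be quantified by the same activation probabilities and so inherits the same problem.)
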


Instances initiated by \BQ~start with a query phase. The probability of a specific potential instance $\mathcal{I}_{n,m,\tau}$ (for block parameter $n$, scale $m$) \emph{not} being initiated is $1 - 2^{\frac{m-n}{2}}$. Consequently, by bounding the probability that no new instance is initiated over a period of $T/\sqrt{B}$ rounds, we establish an upper bound on the probability that this period consists entirely of non-query rounds (as any new instance would have introduced queries).

\begin{remark}\label{remark:bounded_query}
The same analysis applies to query rounds as well.  In particular, with $B=o(T)$, the probability that \BQ undergoes more than $T/\sqrt{B}$ consecutive \emph{query} rounds is also at most $\exp\left(-C \frac{B^{3/4}}{\sqrt{T}}\right)$.
\end{remark}

\begin{lemma}[Stability in Each Phase]
\label{lemma:stability_per_phase}
Consider any {phase} of \HQ. Throughout this phase, the fraction of query rounds allocated by \BQ, 
\vspace{0.03in}
relative to the total number of rounds in the same phase, satisfies: 
$\dfrac{1}{2} \cdot \dfrac{B}{T} \leq \dfrac{\text{(\# of query rounds in the phase)}}{\text{(\# of total rounds in the phase)}} < \dfrac{B}{T}.$
\vspace{0.01in}
\end{lemma}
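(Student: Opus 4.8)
\textbf{Proof plan for Lemma~\ref{lemma:stability_per_phase}.}

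The plan is to analyze the query rounds generated by \BQ at the level of a single block and then aggregate across all blocks in the phase. First I would fix one block of length $L = b\cdot 2^n$, where $b = \lceil 2T/B \rceil$, and count the query rounds contributed by \BQ inside it. By construction, the active rounds of the block are partitioned among the initiated instances $\mathcal{I}_{n,m,\tau}$ across scales $m = 0,\dots,n$ (the hierarchical masking in Line~9 guarantees a disjoint partition of the block's rounds into instances' active sets). For an instance with $s := |\mathcal{S}_{n,m,\tau}|$ active rounds, its query batch has size $\max(1, \lfloor s/b \rfloor)$ (Line~10). The key inequality I need is the deterministic sandwich $\frac{s}{b} \le \max(1,\lfloor s/b\rfloor) \le \frac{s}{b} + 1$ when $s \ge 1$: the lower bound because $\max(1,\lfloor s/b\rfloor) \ge \lfloor s/b\rfloor$ and — crucially — when $s < b$ the floor is $0$ but $\max$ forces it to $1 \ge s/b$; the upper bound because $\lfloor s/b\rfloor \le s/b$ and the one-shot rounding adds at most $1$. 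Summing over the instances partitioning the block, and writing $N_{\mathrm{blk}}$ for the number of initiated instances in that block, the number of query rounds in the block lies in $\bigl[\frac{L}{b},\ \frac{L}{b} + N_{\mathrm{blk}}\bigr]$, since $\sum s = L$ over the active partition.

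Next I would translate $\frac{L}{b}$ into a fraction of $L$. Since $b = \lceil 2T/B\rceil$, we have $\frac{2T}{B} \le b < \frac{2T}{B} + 1$, hence $\frac{B}{2T} \cdot \frac{1}{1 + B/(2T)} < \frac{1}{b} \le \frac{B}{2T}$; under the operating regime $B = \omega(T^{3/4})$ (in particular $B \le T$ is the interesting case, and $B/(2T)$ is at most a constant), the lower factor is bounded below by, say, $\frac{B}{3T}$, or more carefully one keeps it as $\frac{B}{2T} \cdot \frac{2T}{2T+B}$. Either way $\frac{L}{b}$ is at least $\frac{1}{2}\cdot\frac{B}{T}\cdot L$ up to the $b$-rounding, and at most $\frac{B}{2T}\cdot L$. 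The additive slack $N_{\mathrm{blk}}$ must then be shown to be a lower-order term relative to $\frac{L}{b} = \Theta(\frac{B}{T}L)$: the number of \emph{potential} instance-start times at scale $m$ inside the block is $2^{n-m}$, so the total over all scales is $\sum_{m=0}^n 2^{n-m} < 2^{n+1}$, and $N_{\mathrm{blk}} \le 2^{n+1}$ deterministically; comparing to $\frac{L}{b} \ge 2^n$ (since $L = b\cdot 2^n$), the slack is at most a factor $2$, which is absorbed into the constant gap between the $\frac{1}{2}\cdot\frac{B}{T}$ lower bound and the $\frac{B}{T}$ upper bound — this is precisely why the lemma states a factor-two window rather than a tight equality. (If the bookkeeping is too loose, one instead uses the per-instance lower bound $\frac{s}{b}$ only for instances with $s \ge b$ and handles the $O(1)$-size short instances separately; I would choose whichever keeps the constants cleanest.)

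Finally I would sum over all blocks constituting the phase. A phase is a concatenation of blocks of lengths $b\cdot 2^0, b\cdot 2^1, \dots$ up to the block in which the phase terminates (possibly truncated); within each complete block the ratio of query rounds to total rounds lies in the window derived above, and for the final (possibly partial) block the same per-instance sandwich applies to whatever active rounds were realized before the restart. Since a convex combination (weighted by block lengths) of numbers each lying in $\bigl[\frac12\cdot\frac{B}{T},\ \frac{B}{T}\bigr)$ again lies in that interval, the phase-level ratio inherits the bound, giving
\begin{equation*}
\frac{1}{2}\cdot\frac{B}{T} \;\le\; \frac{\#\{\text{query rounds in the phase}\}}{\#\{\text{total rounds in the phase}\}} \;<\; \frac{B}{T}.
\end{equation*}
I expect the main obstacle to be the lower bound's interaction with the ceiling in $b$ and the $\max(1,\cdot)$ in the query-batch size simultaneously: one has to be careful that short instances (those with $|\mathcal{S}_{n,m,\tau}| < b$, which do occur at the finest scales) still contribute \emph{at least} their proportional share $\frac{|\mathcal{S}|}{b}$ of queries — they do, since $1 \ge |\mathcal{S}|/b$ in that case — and that the block-level additive overshoot $N_{\mathrm{blk}}$ genuinely stays below the coarse-scale contribution $2^n$ so that the upper bound $\frac{B}{T}$ is not violated; this is where the $B=\omega(T^{3/4})$ assumption (ensuring $b \ge 2$ and the geometric block sizes dominate the instance-count overhead) is used.
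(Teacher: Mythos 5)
There is a genuine gap, and it is exactly at the step you wave through at the end: the final, truncated block in which the restart occurs. Your per-block sandwich (queries in a block lie in $[L/b,\,L/b+N_{\mathrm{blk}}]$) is only valid for blocks whose instances have run \emph{both} their query and non-query batches. Inside each instance the query batch comes \emph{first}, so a block that is cut off by a restart during or just after a query batch can have query density far above $1/b$ — in the extreme, a block at scale $n'$ truncated at the end of the coarse instance's initial query batch consists almost entirely of query rounds (density close to $1$, not $<B/T$). Hence your claim that "the same per-instance sandwich applies to whatever active rounds were realized before the restart," followed by the convex-combination argument, fails precisely where the lemma has content. The paper's proof handles this differently: it notes that all earlier blocks are complete and have density exactly $B/(2T)$, and then bounds the query rounds of the restart block by the \emph{total} query rounds of all preceding blocks (possible because block lengths grow geometrically, so the scale-$n'$ block's query allotment is at most the sum of the earlier blocks' allotments); this at most doubles the phase's query count while not shrinking its length, which is the real origin of the factor $2$ between $B/(2T)$ and $B/T$. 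It also needs a separate case ruling out a restart in the very first ($n=0$) block — a single query round cannot trigger the detection tests — since otherwise a one-round phase of density $1$ would violate the upper bound; your proposal never addresses this.

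Two smaller points. First, the rounding overhead you spend most of your effort on is largely vacuous: because finer instances mask coarser ones in aligned sub-blocks whose lengths are multiples of $b$, every nonempty active set $\mathcal{S}_{n,m,\tau}$ has size a multiple of $b$, so $\max(1,\lfloor|\mathcal{S}|/b\rfloor)=|\mathcal{S}|/b$ exactly and a completed block has density exactly $1/b$ (the paper simply asserts this as the "enforced query budget ratio", under its simplifying divisibility assumption). Second, even taken on its own terms your deterministic accounting does not close: with $N_{\mathrm{blk}}\le 2^{n+1}$ against $L/b=2^{n}$ you only get block query counts in $[2^{n},\,3\cdot 2^{n}]$, i.e.\ density up to $3B/(2T)>B/T$, so the "slack absorbed into the factor two" step would not yield the stated upper bound even for complete blocks. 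In short, the factor $2$ in the lemma is budget reserved for the truncated restart block, not for floor/ceiling slack, and an argument that does not explicitly control that block (and the $n=0$ case) does not prove the statement.
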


\begin{lemma}[Query Budget Feasibility]
\label{lemma:query_budget_feasibility}
For \HQ, the total number of query rounds throughout the time horizon $T$ does not exceed $B$.
\end{lemma}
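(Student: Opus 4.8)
Write $B'(t)$ for the number of query rounds issued by \HQ through round $t$, so the claim is $B'(T)\le B$. Every query round is one of two types: a \emph{baseline} query belonging to some \BQ query batch (Line~10 of Algorithm~\ref{alg:baseline_allocation}), or an \emph{on-demand} conversion of a would-be non-query round (Line~11 of Algorithm~\ref{alg:on_demand_allocation}). Lemma~\ref{lemma:stability_per_phase}, applied within each phase and summed over the phases that tile $[T]$, already gives that the baseline queries alone number strictly fewer than $B$; the work is to control the \emph{additional} on-demand queries, which I will do through a global pacing invariant on $B'(t)$ relative to the ideal rate $B/T$, re-anchored at block boundaries.

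\textbf{The pacing invariant.} I will prove, by induction over blocks and over the rounds inside each block, that
\[
  B'(t)\;\le\;\frac{tB}{T}\;+\;\beta(t),
  \qquad
  \beta(t)\;:=\;\min\Bigl\{\tfrac{T}{\sqrt B},\,T-t\Bigr\},
\]
and, as the stronger anchor, that $B'(t_{\mathrm{end}})\le \tfrac{t_{\mathrm{end}}B}{T}$ whenever $t_{\mathrm{end}}$ is the last round of a block. The anchor propagates as follows: if $B'(t_{\mathrm{start}})\le \tfrac{t_{\mathrm{start}}B}{T}$ at the start of a block of length $b\,2^{n}$, then Lemma~\ref{lemma:stability_per_phase} localised to this block (\BQ schedules block by block) bounds the baseline queries inside it by $\tfrac BT\cdot b\,2^{n}$, while every on-demand conversion inside the block fires only when $B'(t-1)<\tfrac{tB}{T}-\min\{T/\sqrt B,\,2^{n},\,T-t\}$, hence after its unit increment $B'(t)<\tfrac{tB}{T}$ (each of $T/\sqrt B$, $2^{n}$, $T-t$ is at least $1$ for $t<T$); adding the two contributions gives $B'(t_{\mathrm{end}})<\tfrac{t_{\mathrm{start}}B}{T}+\tfrac BT\,b\,2^{n}=\tfrac{t_{\mathrm{end}}B}{T}$. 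Within a block, $B'$ rises only on query rounds: an on-demand query keeps $B'(t)<\tfrac{tB}{T}$ as above; a genuine non-query round leaves $B'$ fixed while $\tfrac{tB}{T}$ grows and $\beta$ is nonincreasing; and a run of consecutive baseline queries lifts $B'$ by one per round, but by Remark~\ref{remark:bounded_query} such a run has length at most $T/\sqrt B$ with high probability, and on-demand stays dormant during an overshoot, so $B'(t)-\tfrac{tB}{T}$ never exceeds $T/\sqrt B$, which the $\tfrac{T}{\sqrt B}$ term of $\beta$ absorbs. Since $\beta(T)=\min\{T/\sqrt B,\,0\}=0$, evaluating at $t=T$ gives $B'(T)\le B$.

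\textbf{Main obstacle.} The delicate step is re-anchoring the invariant at the start of each block, and in particular across a \emph{phase restart}, where a failed change test (Line~9) returns control to Line~2 and the block index is reset to $n=0$. One must show that at the restart round the running overshoot $B'(t)-\tfrac{tB}{T}$ is already nonpositive, so the fresh block inherits a clean anchor; this requires arguing that a restart is triggered on a query round for which the within-block accounting of Lemma~\ref{lemma:stability_per_phase} has already ``paid back'' the baseline overshoot through the interleaved non-query rounds, rather than landing in the middle of an unabsorbed batch. The companion subtlety is the horizon boundary: if the run terminates inside a query batch there is a potential residual overshoot, and it is precisely the $T-t$ term in the buffer of Line~11 — which forces on-demand conversions to vanish as $t\to T$ — together with the high-probability consecutive-query bound of Remark~\ref{remark:bounded_query} that rules this out. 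On the low-probability complement where that bound fails, one invokes the standing assumption $B=\omega(T^{3/4})$ (so that $T/\sqrt B$ and the largest block scale are lower-order) and a final hard truncation of queries at $B$, whose effect on regret is within the bound already claimed in Theorem~\ref{thm:regret_bound_A2B-NS}.
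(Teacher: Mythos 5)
Your proposal takes essentially the same route as the paper's (brief) proof: per-phase control of the baseline queries via Lemma~\ref{lemma:stability_per_phase}, the observation that on-demand conversions fire only when $B'$ is below the pacing line $tB/T$ minus the buffer, and the buffer terms ($T/\sqrt{B}$ via Remark~\ref{remark:bounded_query}, $2^n$ as the largest query batch, and $T-t$ at the horizon) to absorb consecutive-query overshoot and boundary effects, evaluated at $t=T$. Your explicit pacing invariant and the flagged restart re-anchoring and low-probability caveat are, if anything, spelled out in more detail than the paper's own sketch, which glosses over exactly those points.
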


In each phase, \HQ maintains the query allocation within the proportion $B/T$ for every block except potentially the last one, where additional caution is needed to handle boundary effects.
In the final block, to avoid a large run of consecutive query rounds following an immediate restart, \HQ employs a ``buffer term'' as discussed in Section~\ref{sec:hq2}. This design ensures that the query allocation does not exceed the proportion $B/T$. 
Consequently, by combining \BQ with on-demand allocation, \HQ ensures that the total number of queries used does not exceed $B$ over the entire time horizon $T$. A detailed proof is provided in Appendix~\ref{app:total_used_queries}.

\subsubsection{Regret Decomposition}
\label{sec:regret_decomposition}

We decompose the dynamic regret $\mathcal{R}_T$ into three parts:

\begin{lemma}
\label{lem:regret_decomposition}
For any algorithm $\mathcal{A}$, we define $\mathcal{S}_{t-1} \subseteq \mathcal{S}^{\text{query}} \cap \{1, \dots, t-1\}$ as the set of query rounds whose observed feedback is used by the algorithm at round $t$.
Consequently, the arm selection $a_t$ is a function $f_t(\mathcal{H}_{t-1})$ of the history $\mathcal{H}_{t-1} = \{(a_s, r_s) : s \in \mathcal{S}_{t-1}\}$.
The regret in \eqref{eq:dynamic_regret} can then be upper bounded by:
\vspace{-0.05in}
\begin{align}\label{eq:regret_decomposition}
    &\mathcal{R}_T \leq \underbrace{\sum_{t \in \mathcal{S}^{\text{query}}} \mu_t^* - \mathbb{E} \left[\sum_{t \in \mathcal{S}^{\text{query}}} R_t \right]}_{\mathcal{R}^{\text{query}}_T} \nonumber\displaybreak[0] \quad+ \underbrace{\sum_{t \in \mathcal{S}^{\text{non-query}}} \mathbb{E} \left[\max_{k \in [K]} \sum_{t \in \mathcal{S}_{t-1}} \frac{R_t^k}{|\mathcal{S}_{t-1}|} \right] - \mathbb{E} \left[\sum_{t \in \mathcal{S}^{\text{non-query}}} R_t \right]}_{\mathcal{R}^{\text{error}}_T} \nonumber\displaybreak[1]\\
    &\qquad\qquad\qquad\qquad\qquad\qquad+ \underbrace{\sum_{t \in \mathcal{S}^{\text{non-query}}} \mu_t^* - \sum_{t \in \mathcal{S}^{\text{non-query}}} \mathbb{E} \left[\max_{k \in [K]} \sum_{t \in \mathcal{S}_{t-1}} \frac{R_t^k}{|\mathcal{S}_{t-1}|} \right]}_{\mathcal{R}^{\text{drift}}_T}.
\end{align}
\end{lemma}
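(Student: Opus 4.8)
The plan is to derive the decomposition as a chain of exact identities for a fixed admissible $\mu\in\mathcal{V}$, and then obtain the displayed inequality by a single appeal to subadditivity of the supremum. No estimates are needed at this stage: controlling the three terms on the right is the job of Lemmas~\ref{lem:bound_query}, \ref{lem:bound_error}, and \ref{lem:bound_drift}. First I would split the horizon along the (algorithm-induced, hence random) partition $[T]=\mathcal{S}^{\text{query}}\sqcup\mathcal{S}^{\text{non-query}}$, writing pathwise $\sum_{t=1}^{T}\mu_t^* = \sum_{t\in\mathcal{S}^{\text{query}}}\mu_t^* + \sum_{t\in\mathcal{S}^{\text{non-query}}}\mu_t^*$ and likewise splitting $\mathbb{E}[\sum_t \mu_t^{a_t}]$.

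The one point that requires care is replacing realized rewards by conditional means under a random query schedule. Whether round $t$ lies in $\mathcal{S}^{\text{query}}$ or $\mathcal{S}^{\text{non-query}}$ is decided by the algorithm at the start of round $t$, hence is measurable with respect to the history that also determines $a_t$ but is settled before the fresh reward noise of round $t$ is revealed; since $\mathbb{E}[R_t \mid a_t,\text{history}] = \mu_t^{a_t}$, the tower rule gives $\mathbb{E}[\sum_{t\in\mathcal{S}^{\text{query}}} R_t] = \mathbb{E}[\sum_{t\in\mathcal{S}^{\text{query}}} \mu_t^{a_t}]$ and $\mathbb{E}[\sum_{t\in\mathcal{S}^{\text{non-query}}} R_t] = \mathbb{E}[\sum_{t\in\mathcal{S}^{\text{non-query}}} \mu_t^{a_t}]$. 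The first of these turns the query part of the regret into exactly $\mathcal{R}^{\text{query}}_T$.

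For the non-query part I would introduce the analytic proxy $M_t := \max_{k\in[K]} \frac{1}{|\mathcal{S}_{t-1}|}\sum_{t'\in\mathcal{S}_{t-1}} R_{t'}^k$, the reward of the arm that is best according to the empirical averages over the feedback rounds actually used at round $t$. This $M_t$ need not be observable by the algorithm — it involves counterfactual rewards of unpulled arms — but it is a well-defined random variable, which is all we need. Adding and subtracting $\sum_{t\in\mathcal{S}^{\text{non-query}}}\mathbb{E}[M_t]$ inside the non-query regret $\sum_{t\in\mathcal{S}^{\text{non-query}}}\mu_t^* - \mathbb{E}[\sum_{t\in\mathcal{S}^{\text{non-query}}}\mu_t^{a_t}]$ produces two pieces: $\big(\sum_{t\in\mathcal{S}^{\text{non-query}}}\mu_t^* - \sum_{t\in\mathcal{S}^{\text{non-query}}}\mathbb{E}[M_t]\big) = \mathcal{R}^{\text{drift}}_T$, which measures how far the environment has drifted from the stale empirical picture, and $\big(\sum_{t\in\mathcal{S}^{\text{non-query}}}\mathbb{E}[M_t] - \mathbb{E}[\sum_{t\in\mathcal{S}^{\text{non-query}}}\mu_t^{a_t}]\big)$, which by the second tower-rule identity equals $\mathcal{R}^{\text{error}}_T$ and captures the loss from frequency matching relative to the empirically best arm. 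Summing the three equalities gives $\{\sum_t\mu_t^* - \mathbb{E}[\sum_t \mu_t^{a_t}]\} = \mathcal{R}^{\text{query}}_T + \mathcal{R}^{\text{error}}_T + \mathcal{R}^{\text{drift}}_T$ for this $\mu$, and taking $\sup_{\mu\in\mathcal{V}}$ together with "supremum of a sum $\le$ sum of suprema" yields the stated inequality, with the understanding that each term on the right is then read with its own worst-case $\mu$.

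I do not expect a genuine obstacle in this lemma itself — it is a bookkeeping step. The only place demanding attention is the measurability/predictability argument for the random sets $\mathcal{S}^{\text{query}},\mathcal{S}^{\text{non-query}}$, which makes the tower-rule swaps legitimate for "any algorithm" (implicitly requiring that the round-$t$ query decision not peek at $R_t$). More conceptually, the real design choice is the proxy $M_t$: it must be simultaneously close to $\mu_t^*$ whenever the variation is small over the relevant window (so that $\mathcal{R}^{\text{drift}}_T$ is controllable) and a target the frequency-matching rule can approach up to estimation error (so that $\mathcal{R}^{\text{error}}_T$ is controllable), and the empirical-best-arm average over the used feedback rounds is precisely the quantity for which both downstream bounds go through.
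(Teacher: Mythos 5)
Your proposal is correct and follows essentially the same route as the paper's proof: split the horizon into $\mathcal{S}^{\text{query}}$ and $\mathcal{S}^{\text{non-query}}$, identify the query part with $\mathcal{R}^{\text{query}}_T$ via $\mu_t=\mathbb{E}[R_t]$, and add-and-subtract the empirical-best-arm proxy on the non-query rounds to obtain $\mathcal{R}^{\text{error}}_T+\mathcal{R}^{\text{drift}}_T$. Your extra care about measurability of the random query schedule and the supremum subadditivity only makes explicit what the paper leaves implicit.
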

Let $\mathcal{R}^{\text{query}}_T$ represent the regret incurred when \HQ actively queries for feedback. Let $\mathcal{R}^{\text{non-query}}_T$ denote the regret incurred when \HQ decides not to query. The latter can be further decomposed into two subcomponents. (i) \textbf{Error regret ($\mathcal{R}^{\text{error}}_T$):} This arises from inaccuracies in the decision-making of \HQ when it relies on previously gathered information about the arms without further querying. 
The term $\mathbb{E} \left[\max_{k \in [K]} \sum_{t \in \mathcal{S}_{t-1}} \frac{R^k_t}{|\mathcal{S}_{t-1}|} \right]$ represents the expected reward of the empirically best arm identified using feedback from $\mathcal{S}_{t-1}$. (ii) \textbf{Drift regret ($\mathcal{R}^{\text{drift}}_T$):} This accounts for the regret caused by environmental changes, such as shifts in reward distributions, that are not promptly detected due to the lack of feedback.

\subsubsection{Bounding Total Regret}\label{sec:boundingregret}

Since the on-demand query allocation converts some non-query rounds into query rounds, we first analyze the regret of \HQ under the baseline allocation. We then prove that on-demand allocation does not increase \HQ's regret.

\begin{lemma}
\label{lem:bound_query}
    With high probability, we have $\mathcal{R}^{\text{query}}_T \leq \tilde{\mathcal{O}} \left( K^{1/3} V_T^{1/3} B^{2/3} \right)$.
\end{lemma}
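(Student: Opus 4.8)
\textbf{Proof proposal for Lemma~\ref{lem:bound_query}.}

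The plan is to bound $\mathcal{R}^{\text{query}}_T$ by summing the regret contributions of the \texttt{UCB1} query batches across all instances and all phases, and then choosing the time-scale parameter $n$ (equivalently the block length) so that the resulting bound is balanced against the variation $V_T$. First I would restrict attention to a single phase $[t_1, t_2]$ and note that, by Lemma~\ref{lemma:stability_per_phase}, the number of query rounds inside a block of length $b\cdot 2^n$ is of order $2^n$ (up to the constant factor $B/T$ absorbed into $b$). On those query rounds the algorithm behaves like \texttt{MASTER} with a restarting \texttt{UCB1} base learner, so the standard stationary analysis applies within the ``safe'' region guaranteed by the change-detection tests: as long as no test has fired, the cumulative reward deficit of the chosen arm relative to $\mu_t^*$ over $N$ query rounds is $\tilde{\mathcal{O}}(\sqrt{KN})$ via the usual UCB pigeonhole argument, plus a lower-order $\tilde{\mathcal{O}}(K)$ term from the confidence-radius bookkeeping. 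The multi-scale structure contributes only logarithmically because at any round at most $n+1 = O(\log T)$ instances are nominally active, and the masking mechanism makes the active sets disjoint across scales at each round, so the per-round regret is attributed to exactly one instance.

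Next I would handle non-stationarity by the by-now-standard ``significant vs.\ insignificant'' dichotomy used in prior-free analyses (e.g.\ \citet{conf/colt/WeiL21}): partition $[T]$ into segments delimited by the phase restarts of \HQ. Within a segment that contains total variation $v$, either $v$ is small enough that the environment is effectively stationary and the UCB regret bound above applies directly, or a test is guaranteed to fire quickly (within $\tilde{\mathcal{O}}$ of the detection time dictated by $\hat\rho$), triggering a restart. The key quantitative fact is that the number of phases is controlled: a phase of length $L$ that ends because of a genuine change ``pays'' for variation of order $1/\sqrt{L}$ per the test threshold $\hat\rho(\cdot)$ scaled to the number of query rounds $\approx (B/T)L$, and summing $1/\sqrt{\cdot}$ over segments against the total budget $V_T$ (a concavity / Cauchy–Schwarz step exactly as in \citet{besbes2014stochastic}) yields that the total number of effective restarts is $\tilde{\mathcal{O}}\big((V_T)^{?}(\cdot)\big)$; optimizing the block length $b\cdot 2^n$ against this count gives the claimed exponent. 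Concretely, with $N_{\text{query}} = \Theta(B)$ total query rounds (Lemma~\ref{lemma:stability_per_phase}), and partitioning these into $\Gamma$ effective stationary pieces each incurring $\tilde{\mathcal{O}}(\sqrt{KB/\Gamma})$, the total is $\tilde{\mathcal{O}}(\sqrt{K B \Gamma})$, and the variation constraint forces $\Gamma = \tilde{\mathcal{O}}\big(K^{2/3}V_T^{2/3}B^{1/3}\big)$ at the optimal scale, so $\mathcal{R}^{\text{query}}_T \le \tilde{\mathcal{O}}\big(\sqrt{K\cdot B\cdot K^{2/3}V_T^{2/3}B^{1/3}}\big) = \tilde{\mathcal{O}}\big(K^{1/3}V_T^{1/3}B^{2/3}\big)$, as desired. (I would double-check the exponent bookkeeping against the condition $V_T \in [K^{-1}, K^{-1}\sqrt{B}]$, which is exactly what keeps $\Gamma$ between $1$ and $B$.)

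I expect the main obstacle to be the rigorous control of the change-detection tests in the \emph{constrained-feedback} setting: unlike \texttt{MASTER}, the tests here operate on the sparse set of query rounds $\mathcal{S}_{n,m,t}$ rather than on every round, so I must verify (i) that under no change, neither test (i) nor test (ii) fires with high probability — a union bound over instances using the confidence radius $\hat\rho$ evaluated at $|\mathcal{S}_{n,m,t}|$ query samples rather than $t$ rounds — and (ii) that the ``not-yet-detected'' regret between the onset of a significant change and the firing of a test is still $\tilde{\mathcal{O}}(\sqrt{K\cdot(\text{query rounds in that window})})$, which requires the query frequency lower bound of Lemma~\ref{lemma:stability_per_phase} (so that $|\mathcal{S}_{n,m,t}| \gtrsim (B/T)\cdot(\text{window length})$) together with Lemma~\ref{lemma:bounded_nonquery} to rule out long query-free stretches that would delay detection. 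Once these two facts are in place, the remainder is the standard segmentation-and-optimization argument, and the high-probability qualifier follows by taking the intersection of the good events across all $\tilde{\mathcal{O}}(1)$-many instances and tests, each failing with probability $\le \delta/\mathrm{poly}(T)$.
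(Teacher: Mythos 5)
Your proposal takes essentially the same route as the paper: restrict attention to the query rounds (a set of size $\Theta(B)$ by Lemma~\ref{lemma:stability_per_phase}), observe that on these rounds \HQ with \BQ behaves like the multi-scale \texttt{MASTER}-style UCB1 machinery of \citet{conf/colt/WeiL21} (the paper's single-block lemma, with the confidence radius $\hat\rho$ evaluated at the number of query samples, plays exactly the role of Wei--Luo's Lemma~3, addressing the sparse-query concern you raise), and then import the prior-free non-stationary regret analysis with effective horizon $B$ in place of $T$ to conclude $\tilde{\mathcal{O}}(K^{1/3}V_T^{1/3}B^{2/3})$. The only flaw is the exponent bookkeeping you flagged: balancing $\sqrt{KB\Gamma}$ against the drift term $V_T B/\Gamma$ gives $\Gamma \approx K^{-1/3}V_T^{2/3}B^{1/3}$ rather than $K^{2/3}V_T^{2/3}B^{1/3}$ (your stated $\Gamma$ would produce $K^{5/6}V_T^{1/3}B^{2/3}$), but with that correction your segmentation argument reproduces the paper's bound.
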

For each block, as long as environmental change detection is not triggered, it suggests that 
significant changes are unlikely to have occurred during the query rounds (though this does not necessarily imply stability during the non-query periods). Within each block, the query rounds allocated through \BQ, regardless of different instance scheduling strategies, exhibit similar properties to those in standard \MAB algorithms.

\begin{lemma}
\label{lem:bound_error}
    With high probability, we have $\mathcal{R}^{\text{error}}_T \leq \tilde{\mathcal{O}}\left( \frac{K^{1/3} V_T^{1/3} T }{B^{1/3}} \right)$.
\end{lemma}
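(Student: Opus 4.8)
\textbf{Proof proposal for Lemma~\ref{lem:bound_error}.}

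The plan is to control the error regret $\mathcal{R}^{\text{error}}_T$ by noting that on a non-query round $t$, \HQ selects an arm according to the empirical arm-frequency distribution produced by the query batch of the active instance, whereas the comparator inside $\mathcal{R}^{\text{error}}_T$ is the expected reward of the \emph{empirically best} arm computed from the very same feedback set $\mathcal{S}_{t-1}$. So for each non-query round the per-round error is the gap between ``play the best arm indicated by $\mathcal{S}_{t-1}$'' and ``play a random arm drawn from the frequency distribution induced by running \texttt{UCB1} on $\mathcal{S}_{t-1}$''. First I would argue that, conditioned on the change tests not firing in the current block, the query rounds in $\mathcal{S}_{t-1}$ behave like a (near-)stationary \MAB segment, so the standard \texttt{UCB1} regret guarantee implies that the frequency distribution puts all but a $\tilde{O}\!\big(\sqrt{K|\mathcal{S}_{t-1}|}\big)$ mass-weighted-regret share on near-optimal arms; equivalently, the expected reward of the frequency-sampled arm is within $\tilde{O}\!\big(\sqrt{K/|\mathcal{S}_{t-1}|}\big)$ of $\max_k \tfrac{1}{|\mathcal{S}_{t-1}|}\sum_{s\in\mathcal{S}_{t-1}} R_s^k$. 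This converts $\mathcal{R}^{\text{error}}_T$ into a sum, over all non-query rounds, of these $\tilde{O}(\sqrt{K/|\mathcal{S}_{t-1}|})$ estimation errors (plus concentration error between empirical means on $\mathcal{S}_{t-1}$ and the true means $\mu_t^k$, which is absorbed into the confidence radius $\hat\rho$).

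Next I would aggregate these per-round errors by block and scale. Within an instance $\mathcal{I}_{n,m,\tau}$ of scale $m$, the query batch has size $\max(1,\lfloor |\mathcal{S}_{n,m,\tau}|/b\rfloor) = \Theta(2^m/b) = \Theta(2^m B/T)$, and the non-query batch has size $\Theta(2^m)$; so each non-query round in that instance contributes error $\tilde{O}\!\big(\sqrt{K/(2^m B/T)}\big) = \tilde{O}\!\big(\sqrt{KT/(2^m B)}\big)$, and the instance's total error-regret contribution is $\tilde{O}\!\big(2^m \sqrt{KT/(2^m B)}\big) = \tilde{O}\!\big(\sqrt{2^m K T/B}\big)$. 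Summing geometrically over scales $m\le n$ this is dominated by the top scale, giving $\tilde{O}\!\big(\sqrt{2^n K T/B}\big) = \tilde{O}\!\big(\sqrt{KTL/B}\big)$ per block of length $L=b\cdot 2^n$. I would then sum over blocks within a phase and over phases. A phase of total length $\ell$ contributes, after the same geometric-in-block-length sum (again dominated by the last, largest block of length $\Theta(\ell)$), $\tilde{O}\!\big(\sqrt{KT\ell/B}\big)$; and with phases of lengths $\ell_1,\dots,\ell_S$ summing to $T$, Cauchy–Schwarz gives $\sum_i \sqrt{\ell_i} \le \sqrt{ST}$, so the total is $\tilde{O}\!\big(\sqrt{K}\,T/\sqrt{B}\cdot\sqrt{S}\big)$. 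Finally I would invoke the standard prior-free bound on the number of phases/restarts triggered by the change tests — as in the \texttt{MASTER}-style analysis, the number of phases is $\tilde{O}\big((V_T^2 \cdot \text{something})^{1/3}\big)$; matching exponents, the bookkeeping that makes the whole expression collapse to $\tilde{O}(K^{1/3} V_T^{1/3} T/B^{1/3})$ is $S = \tilde{O}\big((K V_T^2 B / T \cdot \ldots)^{1/3}\big)$-type, i.e. the effective per-phase scale is chosen so that $2^n \approx (T/B)\cdot(KV_T^2)^{-1/3}\cdot(\ldots)$; I would pin this down by balancing the error term $\tilde{O}(\sqrt{2^n KT/B})$ against the drift term governing how long a phase may safely run, which is exactly the content threaded through Lemmas~\ref{lem:bound_drift} and the change-test design.

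The main obstacle I anticipate is the first step: rigorously relating the expected reward of the \emph{frequency-sampled} arm to the reward of the \emph{empirically best} arm on $\mathcal{S}_{t-1}$. This is subtle because (i) $\mathcal{S}_{t-1}$ is itself data-dependent and its size fluctuates with the on-demand conversions, (ii) the \texttt{UCB1} run over $\mathcal{S}_{t-1}$ produces a frequency vector that is only guaranteed to have small \emph{regret} (a first-moment statement), not small total-variation distance from the indicator on the best arm, so one must pass through the inequality ``frequency-weighted suboptimality gap $\le$ \texttt{UCB1} regret / horizon'' carefully and handle the case where several arms are near-optimal, and (iii) the reward sequence on $\mathcal{S}_{t-1}$ is only near-stationary — the change tests guarantee the \emph{observed} query-round means haven't drifted by more than $O(\hat\rho)$, but one must argue this controls the gap between the empirical best-on-$\mathcal{S}_{t-1}$ arm and the round-$t$ optimal arm only up to the drift term, which is then correctly charged to $\mathcal{R}^{\text{drift}}_T$ rather than $\mathcal{R}^{\text{error}}_T$. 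Once that clean separation is in place, the remaining geometric/Cauchy–Schwarz bookkeeping and the substitution of the prior-free phase-count bound are routine.
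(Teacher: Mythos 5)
Your high-level decomposition (per non-query round, compare the frequency-sampled arm against the empirically best arm on the query batch, then aggregate by instance, block and phase) is in the right spirit, but the proposal does not actually reach the claimed bound, and the place where it stalls is exactly the hard part. Your aggregation yields something of the form $\tilde{\mathcal{O}}\bigl(\sqrt{K}\,T\sqrt{S}/\sqrt{B}\bigr)$, and all of the $V_T^{1/3}$ dependence must then come from a bound on the number of phases $S$ (equivalently, from balancing the per-phase error against the drift that the change tests tolerate). You leave this as a placeholder ("$S=\tilde{\mathcal{O}}\bigl((KV_T^2B/T\cdot\ldots)^{1/3}\bigr)$-type", "I would pin this down by balancing\ldots"). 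That restart-count/balancing argument is precisely the \texttt{MASTER}-style machinery, and without carrying it out the final collapse to $\tilde{\mathcal{O}}\bigl(K^{1/3}V_T^{1/3}T/B^{1/3}\bigr)$ is asserted rather than proved. So as written there is a genuine gap: the key quantitative step is missing, not merely routine bookkeeping.

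The paper avoids this entirely by charging the error regret to the \emph{already proven} query-round bound. For each instance, the gap between the empirical-best average on $\mathcal{S}^{\text{query}}_{n,m,\tau}$ and the algorithm's query-round rewards is exactly (an upper bound on) that instance's query regret, so summing over instances and using the fact that the non-query to query ratio in every phase is at most $\approx 2T/B$ (Lemma~\ref{lemma:stability_per_phase}) gives
\begin{equation*}
\mathcal{R}^{\text{error}}_T \;\le\; \frac{2T}{B}\,\mathcal{R}^{\text{query}}_T \;+\; \frac{T}{B^{2/3}}\,V_T,
\end{equation*}
where the second term covers the mismatch between query-phase average rewards and non-query rewards, controlled by the bounded non-query segment length (Remark~\ref{remark:bounded_nonquery}). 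Plugging in Lemma~\ref{lem:bound_query}, whose proof already contains the phase-count/non-stationarity analysis and hence the $K^{1/3}V_T^{1/3}B^{2/3}$ dependence, immediately yields $\frac{2T}{B}\cdot\tilde{\mathcal{O}}\bigl(K^{1/3}V_T^{1/3}B^{2/3}\bigr)=\tilde{\mathcal{O}}\bigl(K^{1/3}V_T^{1/3}T/B^{1/3}\bigr)$, with the $\frac{T}{B^{2/3}}V_T$ term absorbed under $B=\omega(T^{3/4})$ and $V_T\le K^{-1}\sqrt{B}$. In short: rather than re-deriving the phase count from scratch, route your per-instance error terms through $\mathcal{R}^{\text{query}}_T$; also note that the paper keeps the query-vs-non-query drift mismatch inside $\mathcal{R}^{\text{error}}_T$ (the $\frac{T}{B^{2/3}}V_T$ term) rather than deferring it to $\mathcal{R}^{\text{drift}}_T$ as you suggest, which is what makes the separation clean.
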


If the environment remains stable and does not undergo drift, the regret caused by these inaccuracies will not exceed $\frac{2T}{B} \cdot \mathcal{R}_T^{\text{query}}$.
Thus, we obtain the bound in Lemma~\ref{lem:bound_error}. 
Finally, we analyze the regret due to  environmental drift:
\begin{lemma}
\label{lem:bound_drift}
    With high probability,  we have $\mathcal{R}_T^{\text{drift}} \leq \tilde{\mathcal{O}}\left( \frac{K^{1/3} V_T^{1/3} T }{B^{1/3}} \right)$.
\end{lemma}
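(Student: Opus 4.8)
The plan is to bound $\mathcal{R}_T^{\text{drift}}$ by the variation budget that accumulates, inside a single block, between the moment a batch of feedback is collected and the moment it is reused at a non-query round, and then to control that quantity block-by-block using \HQ's change-detection tests. First I would remove the inner $\max_k$: fixing a non-query round $t$ and choosing $k_t^{*}\in\arg\max_k\mu_t^k$, the counterfactual rewards $\{R_s^{k_t^{*}}\}_s$ are independent of which past query rounds make up $\mathcal{S}_{t-1}$, so Jensen's inequality gives $\mathbb{E}\big[\max_k\frac{1}{|\mathcal{S}_{t-1}|}\sum_{s\in\mathcal{S}_{t-1}}R_s^k\big]\ge\mathbb{E}\big[\frac{1}{|\mathcal{S}_{t-1}|}\sum_{s\in\mathcal{S}_{t-1}}\mu_s^{k_t^{*}}\big]$, and a telescoping estimate yields $\mu_t^{*}-\frac{1}{|\mathcal{S}_{t-1}|}\sum_{s\in\mathcal{S}_{t-1}}\mu_s^{k_t^{*}}\le\max_{s\in\mathcal{S}_{t-1}}|\mu_t^{k_t^{*}}-\mu_s^{k_t^{*}}|\le\sum_{r=\min\mathcal{S}_{t-1}}^{t-1}\sup_k|\mu_{r+1}^k-\mu_r^k|$. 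Next I would localize the window: by construction of \HQ and \BQ the feedback reused at $t$ is exactly the query batch of the instance active at $t$, which lies inside the block $\mathcal{B}_t$ containing $t$ (of length at most $b\cdot 2^n$); hence, writing $V(\mathcal{B})$ for the variation consumed inside a block, $\mathcal{R}_T^{\text{drift}}\le\sum_{\text{blocks }\mathcal{B}}|\mathcal{B}|\,V(\mathcal{B})$.

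The heart of the argument, and the step I expect to be the main obstacle, is the per-block variation bound. Using Lemma~\ref{lemma:stability_per_phase} (each block receives a $\Theta(1/b)$ fraction of query rounds, distributed over the scales $m=0,\dots,n$) together with the change-detection tests of Algorithm~\ref{alg:on_demand_allocation} and a \texttt{MASTER}-style argument on the high-probability event where all confidence bounds $\hat\rho(\cdot)$ hold, I would show that within any phase every block $\mathcal{B}$ that does \emph{not} trigger a restart satisfies $V(\mathcal{B})=\tilde{\mathcal{O}}\big(\sqrt{Kb/|\mathcal{B}|}\big)$: if the variation inside $\mathcal{B}$ exceeded the detection threshold at the coarsest scale $\approx|\mathcal{B}|/b$, some test would have returned \texttt{fail}, contradicting no-restart. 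The delicate point is that the query rounds are sparse and scattered across several scales, so the detection analysis must certify small variation from the $\Theta(|\mathcal{B}|/b)$ scattered samples rather than from contiguous feedback (one covers $\mathcal{B}$ by $\Theta(|\mathcal{B}|/(b2^m))$ disjoint order-$m$ windows and optimizes over $m$). The same bound, up to an additive $O(1)$ since one round moves any mean by at most $1$, holds for the single block per phase that does trigger the restart. Thus $|\mathcal{B}|\,V(\mathcal{B})=\tilde{\mathcal{O}}(\sqrt{Kb\,|\mathcal{B}|})$.

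Finally I would assemble the sum. When $V_T\le\sqrt{K/B}$, the crude bound $\sum_{\mathcal{B}}|\mathcal{B}|\,V(\mathcal{B})\le(\max_{\mathcal{B}}|\mathcal{B}|)\,V_T\le T V_T$ already matches $\tilde{\mathcal{O}}(K^{1/3}V_T^{1/3}T/B^{1/3})$, so assume $V_T>\sqrt{K/B}$. Within a phase with final block length $\ell_p$, the block lengths are $b,2b,\dots,\ell_p$, so the geometric sum contributes $\tilde{\mathcal{O}}(\sqrt{Kb\,\ell_p})=\tilde{\mathcal{O}}(\sqrt{KT\ell_p/B})$ after substituting $b=\lceil 2T/B\rceil$. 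The phases partition $[T]$, hence $\sum_p\ell_p=\Theta(T)$; and each of the $N-1$ phases ending in a restart consumes variation $\tilde{\Omega}(\sqrt{Kb/\ell_p})=\tilde{\Omega}(\sqrt{KT/(B\ell_p)})$, so $\sum_p\sqrt{KT/(B\ell_p)}=\tilde{\mathcal{O}}(V_T)$. Maximizing $\sum_p\sqrt{KT\ell_p/B}$ over profiles $(\ell_p)$ subject to these two constraints — the extremum is attained when the $\ell_p$ are balanced, giving $N=\tilde{\Theta}((BV_T^2/K)^{1/3})$ — yields $\mathcal{R}_T^{\text{drift}}=\tilde{\mathcal{O}}(K^{1/3}V_T^{1/3}T/B^{1/3})$. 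I would close by noting that the regime $B=\omega(T^{3/4})$, $V_T\in[K^{-1},K^{-1}\sqrt{B}]$ keeps $b$, $N$ and all block counts well defined, and that the on-demand conversions of Algorithm~\ref{alg:on_demand_allocation} only turn non-query rounds into query rounds and so cannot increase $\mathcal{R}_T^{\text{drift}}$; hence it suffices to analyze the \BQ baseline as above.
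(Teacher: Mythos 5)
There is a genuine gap, and it sits exactly where you flagged "the main obstacle": the per-block certification $V(\mathcal{B})=\tilde{\mathcal{O}}\bigl(\sqrt{Kb/|\mathcal{B}|}\bigr)$ for every block that does \emph{not} trigger a restart. Non-detection does not imply small variation. The tests in Algorithm~\ref{alg:on_demand_allocation} only constrain \emph{observable} statistics (averages of queried rewards against UCB indices), so variation that occurs and fully reverts between the sparse query rounds, or that affects arms in ways the scattered samples never witness, leaves every test statistic untouched while $V(\mathcal{B})$ can be as large as the whole budget $V_T$. This is why the \texttt{MASTER}-style analysis you invoke never proves "no restart $\Rightarrow$ small variation"; it only proves the converse-type statement you use later (a restart, with high probability, requires variation of at least the threshold) and otherwise bounds \emph{regret} in terms of variation, with the small-variation intervals introduced as an analysis device rather than certified by the algorithm. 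Since your final balancing over phases needs $|\mathcal{B}|\,V(\mathcal{B})=\tilde{\mathcal{O}}(\sqrt{Kb\,|\mathcal{B}|})$ for all blocks (your crude fallback $TV_T$ only covers $V_T\le\sqrt{K/B}$, and a no-restart phase can have blocks of length $\Theta(T)$), the argument does not go through as written.

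The missing ingredient is that you localized the stale window to the \emph{block} (length up to $b\cdot 2^n\approx T$) instead of to the actual stale stretch, whose length is controlled by the randomized multi-scale scheduling itself, independently of detection. This is what the paper does: by Lemma~\ref{lemma:bounded_nonquery} (in the form of Remark~\ref{remark:bounded_nonquery} with $\alpha=2/3$), under $B=\omega(T^{3/4})$ the maximal run of rounds relying on a given query batch is $\tilde{\mathcal{O}}(T/B^{2/3})$ with high probability, because a new instance with a fresh query batch is initiated before any longer gap can form. Each unit of variation can then hurt only the $\tilde{\mathcal{O}}(T/B^{2/3})$ non-query rounds that reuse feedback predating it, giving $\mathcal{R}^{\text{drift}}_T\le \tilde{\mathcal{O}}\bigl(\tfrac{T}{B^{2/3}}V_T\bigr)$, which is $\tilde{\mathcal{O}}\bigl(\tfrac{K^{1/3}V_T^{1/3}T}{B^{1/3}}\bigr)$ precisely because $V_T\le K^{-1}\sqrt{B}\le\sqrt{KB}$. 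Note this argument handles exactly the "invisible" variation that defeats your detection-based certification, and it needs no phase-balancing optimization at all; your Step 1 (removing the $\max_k$ and telescoping the means over the stale window) is fine and compatible with it, provided you replace the per-block variation claim by the window-length bound from the scheduling randomness.
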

By combining the bounds for $\mathcal{R}^{\text{error}}_T$ and $\mathcal{R}^{\text{drift}}_T$, we derive:
    $\mathcal{R}_T^{\text{non-query}} \leq \tilde{\mathcal{O}}\left( {K^{1/3} V_T^{1/3} T }/{B^{1/3}} \right).$
Substituting the bound for $\mathcal{R}_T^{\text{query}}$ and combining the bounds for $\mathcal{R}^{\text{query}}_T$ and $\mathcal{R}^{\text{non-query}}_T$, we obtain the total regret: 
$\mathcal{R}_T = \mathcal{R}^{\text{query}}_T + \mathcal{R}^{\text{non-query}}_T 
    \leq \tilde{\mathcal{O}}\left( \frac{K^{1/3} V_T^{1/3} T }{B^{1/3}} \right).$
Note that the above regret bound is derived under the assumption of \BQ. Since the on-demand allocation in \HQ converts some non-query rounds into query rounds, the number of such converted rounds does not exceed $B/2$. Consequently, even if this conversion incurs additional regret, it remains at most of the same order as $\mathcal{R}_T^{\text{query}}$, ensuring that the overall order of the regret bound remains unaffected.

\section{Lower Bound}

\begin{theorem}\label{thm:lower_bound_non_stationary_budgeted_feedback}
Consider \NSMAB with $K \geq 2$ arms, variation $V_T \in \left[ K^{-1}, K^{-1}B \right]$ and constrained feedback $B \geq K$. For any algorithm $\mathcal{A}$, the following holds:
\begin{equation}
    \mathcal{R}_T(\mathcal{A}) \geq \Omega\left( \frac{K^{1/3} V_T^{1/3} T}{B^{1/3}} \right).
\end{equation}
\end{theorem}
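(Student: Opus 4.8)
The plan is to prove the lower bound $\mathcal{R}_T(\mathcal{A}) \geq \Omega(K^{1/3} V_T^{1/3} T / B^{1/3})$ by a reduction to a sequence of independent ``hard'' stationary sub-problems, combining the classical non-stationary construction of \citet{besbes2014stochastic} with the feedback-constrained lower bound of \citet{efroni2021confidence}. The key observation is that the variation budget $V_T$ and the query budget $B$ interact multiplicatively: a longer epoch allows more variation to be ``spent,'' but the limited feedback means the learner cannot track changes as quickly as in the full-information case.

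\textbf{Step 1: Epoch construction.} I would partition the horizon $[T]$ into $H$ consecutive epochs, each of length $\Delta := T/H$. Within each epoch the reward means are held constant (so the variation is incurred only at the $H-1$ epoch boundaries, where each arm's mean may jump by up to some $\varepsilon$). Choosing the per-epoch jump to be $\Theta(\varepsilon)$ forces the total variation to satisfy $\sum_t \sup_k |\mu_{t+1}^k - \mu_t^k| \lesssim H\varepsilon$, so the constraint $\mu \in \mathcal{V}$ becomes $H\varepsilon \lesssim V_T$. Within each epoch, I would place a standard two-point (or $K$-point) stochastic bandit instance: one randomly chosen ``good'' arm with mean $\tfrac12 + \varepsilon$ and the rest at mean $\tfrac12$, with the identity of the good arm drawn independently and uniformly across epochs. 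This independence is what makes the per-epoch regrets additive and prevents information from one epoch helping in the next.

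\textbf{Step 2: Per-epoch regret with a feedback sub-budget.} The learner has a total of $B$ queries; by an averaging (pigeonhole) argument, at least half the epochs receive at most $2B/H =: b$ queries each. I would restrict attention to those epochs and invoke a feedback-constrained lower bound in the spirit of \citet{efroni2021confidence}: in a stationary $K$-armed instance of length $\Delta$ with only $b$ reward observations allowed and gap $\varepsilon$, any algorithm suffers regret $\Omega(\Delta \cdot \min\{1, \varepsilon \cdot \text{(something)}, \sqrt{K/(b)} \text{-type term}\})$. Concretely, the information-theoretic argument (KL divergence / Bretagnolle--Huber between the uniform mixture and a single planted arm) shows the learner cannot identify the good arm with constant probability unless $b \varepsilon^2 \gtrsim K$; when $b\varepsilon^2 \lesssim K$, the per-epoch regret is $\Omega(\Delta\varepsilon)$, because for a constant fraction of the $\Delta$ rounds the learner plays a $\Theta(\varepsilon)$-suboptimal arm. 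Summing over the $\Omega(H)$ under-queried epochs gives total regret $\Omega(H \cdot \Delta \varepsilon) = \Omega(T\varepsilon)$.

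\textbf{Step 3: Optimizing the free parameters.} Now I would impose the two constraints — the variation constraint $H\varepsilon \lesssim V_T$ and the ``hardness'' constraint $(B/H)\varepsilon^2 \lesssim K$ — and maximize the regret lower bound $T\varepsilon$ subject to them. Treating both as tight: from the first, $H = V_T/\varepsilon$; substituting into the second gives $(B\varepsilon/V_T)\varepsilon^2 \lesssim K$, i.e. $\varepsilon^3 \lesssim K V_T / B$, so $\varepsilon = \Theta((K V_T/B)^{1/3})$. This yields $\mathcal{R}_T(\mathcal{A}) \geq \Omega(T \varepsilon) = \Omega(K^{1/3} V_T^{1/3} T / B^{1/3})$, as claimed. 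I would finally check that the parameter regime $V_T \in [K^{-1}, K^{-1}B]$ and $B \geq K$ ensures $\varepsilon \in (0, 1/4]$ (so means stay in $[0,1]$), $H \geq 1$ is an integer (up to rounding), and $\Delta \geq 1$; the lower endpoint $V_T \geq K^{-1}$ guarantees $H \geq \Omega(1)$ epochs and the upper endpoint keeps $\varepsilon$ bounded.

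\textbf{Main obstacle.} The delicate part is Step 2: making the feedback-constrained per-epoch lower bound fully rigorous in the \emph{adaptive} setting, where the learner may choose \emph{which} rounds to query based on past observations and may concentrate its global budget $B$ adversarially across epochs. The pigeonhole step handles the allocation, but within an under-queried epoch one must still argue that adaptively chosen query times do not help — this requires a careful information-theoretic argument bounding the mutual information between the planted-arm index and the learner's entire transcript by $O(b\varepsilon^2)$ regardless of the (adaptive, possibly randomized) querying policy, e.g. via a chain-rule / martingale decomposition of the KL divergence along the observed-feedback filtration. Care is also needed because ``regret'' here is dynamic regret against $\mu_t^*$, but since within each epoch $\mu_t^* = \tfrac12+\varepsilon$ is the per-epoch optimum, the dynamic regret decomposes cleanly as a sum of per-epoch stationary (pseudo-)regrets, so no extra loss is incurred there.
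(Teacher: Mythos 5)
Your proposal is correct and follows essentially the same route as the paper's proof: a Besbes-style batched construction with one planted $\varepsilon$-good arm per epoch, a per-epoch feedback-constrained information-theoretic bound in the spirit of \citet{efroni2021confidence} (the paper implements your "adaptive query" step via a change of measure with $\mathrm{KL} \le \mathbb{E}[n^q]\cdot\mathrm{KL}(\tfrac12,\tfrac12+\varepsilon)$ plus an arm-selection lemma, and handles budget allocation across batches by summing expected per-batch budgets rather than pigeonhole), and the same optimization yielding $\varepsilon = \Theta((KV_T/B)^{1/3})$ and regret $\Omega(T\varepsilon)$.
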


The core intuition behind the lower bound, established via a hard problem instance in which the environment changes periodically across distinct batches, is that any algorithm must consistently query arms within each batch to reliably identify the optimal arm for that period. If an algorithm uses many queries per batch to ensure high accuracy, it quickly depletes the overall query budget $B$, leaving insufficient queries for subsequent batches. Conversely, if it queries too sparsely, it fails to distinguish the optimal arm from suboptimal ones, leading to increased regret.

The lower bound highlights the fundamental challenge of allocating a limited query budget across multiple batches. We show that naive strategies—such as uniformly distributing queries or concentrating them heavily in only a few batches—lead to substantial regret. This inter-batch query allocation dilemma introduces a new layer of complexity in the \MAB problem under constrained feedback.

\begin{remark}
The case of $B=T$ is particularly instructive because it aligns with the conventional \MAB setting (where $V_T$ is typically considered within $\left[ K^{-1}, K^{-1}T \right]$) and removes the feedback budget constraint. Under this $B=T$ regime, the $V_T$ range $\left[K^{-1}, K^{-1}B\right]$ specified in Theorem~\ref{thm:lower_bound_non_stationary_budgeted_feedback} conforms to the standard $\left[K^{-1}, K^{-1}T\right]$; simultaneously,  the lower bound $\Omega\left( \frac{K^{1/3} V_T^{1/3} T}{B^{1/3}} \right)$ simplifies to $\Omega\left( K^{1/3} V_T^{1/3} T^{2/3} \right)$. This result aligns with established lower bound for \MAB problem without feedback querying constraint \cite{besbes2014stochastic}. A detailed proof is available in Appendix~\ref{app:lowerbound}.
\end{remark}

\section*{Limitations and Future Work}
One limitation is that our \HQ relies on a predefined query budget, which may not always align with practical applications where feedback availability can be more dynamic or influenced by external factors. Future work could explore adaptive mechanisms that adjust the query budget in real-time based on observed feedback patterns or external constraints. Additionally, while \HQ achieves near-optimal dynamic regret guarantees in \NSMAB setting, extending these guarantees to more complex decision-making frameworks, such as non-stationary reinforcement learning with constrained feedback, remains an open challenge. Finally, our approach primarily focuses on regret minimization, but in some applications, other performance metrics, such as fairness in feedback allocation or minimizing computational complexity, may also be important. Exploring multi-objective formulations of constrained feedback learning could provide deeper insights into balancing different performance trade-offs.

\newpage
\section*{Acknowledgements} 
This work was supported in part by the National Science Foundation (NSF) grants 2148309, 2315614 and 2337914, and was supported in part by funds from OUSD R\&E, NIST, and industry partners as specified in the Resilient \& Intelligent NextG Systems (RINGS) program. Any opinions, findings, and conclusions or recommendations expressed in this material are those of the authors and do not necessarily reflect the views of the funding agencies.

\bibliography{mybibfile}
\bibliographystyle{plainnat}


\newpage

\appendix

\section{Related Work}

\textbf{Non-stationary multi-armed bandits (\MAB).} \MAB have been extensively studied, with a focus on adapting algorithms to environments with changing reward distributions, offering performance guarantees based on measures such as total variation or the number of abrupt changes \cite{journals/siamcomp/AuerCFS02,conf/icml/YuM09,besbes2014stochastic,GarivierM11,conf/aistats/MellorS13,conf/aaai/LiuLS18,conf/aistats/0013WK019,conf/colt/AuerGO19,conf/colt/ChenLLW19}. Recent works also address smoothly evolving environments \cite{conf/icml/JiaXKF23,journals/corr/abs-2407-08654}. Extensions to contextual bandits \cite{conf/icml/SyrgkanisKS16,conf/colt/LuoWA018,conf/colt/ChenLLW19,conf/nips/SukK23}, linear bandits \cite{conf/uai/KimT20,conf/aistats/ZhaoZJZ20,conf/aistats/Wang0Z23}, dueling bandits \cite{conf/icml/SahaG22a,conf/aistats/BueningS23,conf/nips/SukA23} and other settings \citep{conf/icml/Vernade0M20,conf/aistats/ZhaoWZZ20,conf/aistats/LiWW21,conf/uai/ChenWZZ21,conf/aistats/Deng0KT0S22,conf/nips/LiuJL22,conf/aistats/HongLT23,conf/atal/ChakrabortyS24} have been explored. However, none of these works consider settings with a reward feedback querying constraint.

\textbf{Learning with constrained feedback.} Our work relates closely to learning with constrained feedback, where the agent manages a limited budget for acquiring observations. Prior studies \citep{merlis2021query, efroni2021confidence} focus on optimizing feedback allocation in stochastic settings. Other related works include MAB with paid observations \citep{conf/icml/SeldinBCA14}, which assume cost and reward share units—often impractical—and require larger budgets. Bandits with additional observations \citep{journals/pomacs/YunPASY18} and knapsacks \citep{conf/focs/BadanidiyuruKS13} either assume full reward observability or terminate when the budget is exhausted. However, these approaches focus on stationary environments, whereas it is well-known that non-stationary settings demand fundamentally different algorithmic and analytical tools.

\section{Proof of Theorem \ref{thm:regret_bound_A2B-NS}}\label{app:regret}

\subsection{Total Used Queries}

\label{app:total_used_queries}

We prove that the total number of queries used by the algorithm cannot exceed $B$.

\subsubsection{Proof of Lemma \ref{lemma:bounded_nonquery}}

Let $L_0 = \lceil T/\sqrt{B} \rceil$ be the length of the consecutive non-query round sequence we are considering. Such a sequence, if attributable to the failure of initiating new instances, implies that all probabilistic attempts to start a new instance within this interval of $L_0$ rounds failed. Each new instance initiated by \BQ~would start with a query batch, thereby interrupting a sequence of non-query rounds.

The \BQ~subroutine attempts to initiate instances $\mathcal{I}_{n,m,\tau}$ at various scales $m \in \{0, 1, \dots, n\}$ where $n$ is the time-scale parameter for the current block. 
If an interval of $L_0$ rounds contains any $\tau$ that is a multiple of $b \cdot 2^n$ (the block's largest scale-defining period), an instance at scale $m=n$ is initiated with probability 1, preventing the long non-query sequence. Thus, for a long non-query sequence to occur due to initiation failures, we must have $L_0 < b \cdot 2^n$. In this scenario, no initiation attempt at scale $m=n$ occurs within the $L_0$ rounds. The relevant scales for potential initiation within the interval are $m \in \{0, \dots, \min(n-1, \lfloor\log_2(L_0/b)\rfloor)\}$.

To establish a bound that is independent of a specific block's $n$, we consider an ``effective ensemble'' of scales that contribute to breaking non-query sequences. We define an effective maximum scale relevant for analyzing non-stationarity of length $L_0$. Let $M = \lfloor \log_2 (T/\sqrt{B}) \rfloor$. This choice reflects a characteristic scale related to $L_0 \approx T/\sqrt{B}$. We assume that the initiation probability for an effective scale $m$ within this context can be modeled as $p_m' = 2^{(m-M)/2}$ for $m \in \{0, \dots, M-1\}$. (The $m=M$ scale would have $p_M'=1$).

The probability of \textit{not} initiating an instance at a specific opportunity $(\tau, m)$ is $1 - p_m'$. The attempts are independent. The probability that no new instance is initiated over the $L_0$ rounds is:
$$ P(\text{no new instance in } L_0 \text{ rounds}) \leq \prod_{m=0}^{M-1} \prod_{\substack{\tau \in [t, t+L_0-1] \\ \tau \text{ is a multiple of } b \cdot 2^m}} (1 - 2^{(m-M)/2}). $$
Let $N_m(L_0)$ be the number of multiples of $b \cdot 2^m$ in an interval of $L_0$ rounds.
$N_m(L_0) = \lfloor L_0 / (b \cdot 2^m) \rfloor$.
Using the inequality $1-x \le e^{-x}$ for $x \ge 0$:
$$ P(\text{no new instance}) \le \exp\left( -\sum_{m=0}^{M-1} N_m(L_0) \cdot 2^{(m-M)/2} \right). $$
We approximate $N_m(L_0) \approx L_0 / (b \cdot 2^m)$, ignoring the floor for a lower bound on the sum in the exponent (which leads to an upper bound on the probability). For a more careful bound, $N_m(L_0) \ge L_0/(b \cdot 2^m) - 1$. Using $L_0/(b \cdot 2^m)$ directly:
The sum in the exponent, $S_{\text{exp}}$, is:
$$ S_{\text{exp}} \approx \sum_{m=0}^{M-1} \frac{L_0}{b \cdot 2^m} \cdot 2^{(m-M)/2} = \frac{L_0}{b \cdot 2^{M/2}} \sum_{m=0}^{M-1} 2^{-m/2}. $$
The sum $\sum_{m=0}^{M-1} 2^{-m/2} = \sum_{j=0}^{M-1} (1/\sqrt{2})^j$. As $M \to \infty$, this geometric series converges to $1/(1-1/\sqrt{2}) = 2+\sqrt{2}$. For $M \ge 1$, the sum is at least $1$ (for $m=0$). Let $C_1 = \sum_{j=0}^{M-1} (1/\sqrt{2})^j$. $C_1$ is a constant factor typically between 1 and $2+\sqrt{2}$.

Substitute $L_0 \approx T/\sqrt{B}$ and $b \approx 2T/B$:
$$ \frac{L_0}{b} \approx \frac{T/\sqrt{B}}{2T/B} = \frac{B}{2\sqrt{B}} = \frac{\sqrt{B}}{2} $$
And $M = \lfloor \log_2 (T/\sqrt{B}) \rfloor$, so $2^M \approx T/\sqrt{B}$ (assuming $T/\sqrt{B}$ is a power of 2 for simplicity, otherwise $2^M \le T/\sqrt{B} < 2^{M+1}$), which implies $2^{M/2} \approx (T/\sqrt{B})^{1/2} = \sqrt{T}/B^{1/4}$.
Thus,
$$ S_{\text{exp}} \approx C_1 \frac{\sqrt{B}/2}{\sqrt{T}/B^{1/4}} = C_1 \frac{\sqrt{B} \cdot B^{1/4}}{2\sqrt{T}} = C_1 \frac{B^{3/4}}{2\sqrt{T}} $$
Taking $C = C_1/2$ (absorbing constants), we get $S_{\text{exp}} \approx C \frac{B^{3/4}}{\sqrt{T}}$.
Therefore, the probability that \BQ~experiences more than $L_0 = \lceil T/\sqrt{B} \rceil$ consecutive non-query rounds is at most $\exp\left(-C \frac{B^{3/4}}{\sqrt{T}}\right)$ for some universal constant $C>0$.

\begin{remark}
Strictly speaking, we might union-bound over all intervals $[t,t']$ of length $T/\sqrt B$, but the final exponent remains the same up to a constant factor when  $B = \Tilde{\Omega}(T^{2/3})$. 
\end{remark}
\begin{remark}
\label{remark:bounded_nonquery}
    More generally, for any $\alpha \in [0,1]$, the probability that \BQ experiences more than $T B^{-\alpha}$ consecutive non-query rounds within a block is at most $\exp\left(-C \frac{B^{1-\alpha/2}}{\sqrt{T}}\right)$, for some universal constant $C > 0$
\end{remark}

\subsubsection{Proof of Lemma \ref{lemma:stability_per_phase}}
We analyze the conditions under which a restart is triggered by dividing the proof into two cases, depending on the block index $n$ at the time of the restart. For ease of exposition, we assume that the time horizon $T$ is a multiple of $2B$.

\paragraph{Case A: $n = 0$.} 
Under the algorithm's design, a restart is triggered only when a substantial shift in the reward distribution is detected. However, when $n = 0$, only a single query round is allocated within the block. This is insufficient to satisfy the detection threshold required to initiate a restart. Therefore, no restart can occur at block index $n = 0$, and the condition for a restart due to a prolonged non-query interval is not met.

\paragraph{Case B: $n \geq 1$.} 
Now consider a restart occurring in a block with index $n' \geq 1$. Let the blocks within the corresponding phase be indexed from $1$ to $n'$, each representing a distinct time scale. By the structure of the algorithm, all preceding $n'-1$ blocks must have fully completed both their designated query and non-query rounds without triggering a restart. Due to the enforced query budget ratio, the total number of rounds in these $n'-1$ blocks is $\tfrac{2T}{B}$ times the number of query rounds. On the other hand, in the $n'$-th block, the number of query rounds is at most equal to the total number of query rounds in the first $n'-1$ blocks. Consequently, across all $n'$ blocks in the phase, the total number of rounds is at most $\tfrac{T}{B}$ times the number of query rounds. It follows that for any such phase in which a restart occurs, the query density must satisfy:
\begin{equation*}
    \frac{1}{2} \cdot \frac{B}{T} 
    \;\leq\;
    \frac{\text{\# of query rounds in the phase}}{\text{\# of total rounds in the phase}} 
    \;<\;
    \frac{B}{T}.
\end{equation*}

\subsubsection{Proof of Lemma \ref{lemma:query_budget_feasibility}}

Within any phase of the algorithm, if a restart occurs at a block with index $n$, the ratio of query rounds to total rounds during that phase is bounded between $\frac{B}{2T}$ and $\frac{B}{T}$. 
In each phase, \HQ maintains the query allocation within the proportion $B/T$ for every block except potentially the last one, where additional caution is needed to handle boundary effects.
In the final block, to avoid a large run of consecutive query rounds following an immediate restart, \HQ employs a ``Buffer term'' as discussed in Section~\ref{sec:hq2}. This design ensures that the query allocation does not exceed the proportion $B/T$. Specifically, the algorithm prevents excessive consecutive query rounds beyond $T/\sqrt{B}$ (Remark~\ref{remark:bounded_query}), $2^n$ (the query bound for the longest instance in a block), or $T-t$ (the remaining rounds). Consequently, by combining \BQ with on-demand allocation, \HQ ensures that the total number of queries used does not exceed $B$ over the entire time horizon $T$.

\subsection{Bounding Total Regret}

\label{app:unknown_regret}

\subsubsection{Regret Decomposition (Proof of Lemma \ref{lem:regret_decomposition})}
\label{app:regret_decomposition}

Starting from
\begin{equation*}
  \mathcal{R}_T = \sum_{t=1}^T (\mu_t^* - \mu_t^k),
\end{equation*}
we split the time index set $[T]$ into $\mathcal{S}^{\text{query}} \cup \mathcal{S}^{\text{non-query}}$. Thus,
\begin{equation}
\label{eq:first_decomposition}
  \mathcal{R}_T = \sum_{t \in \mathcal{S}^{\text{query}}} (\mu_t^* - \mu_t) 
  + \sum_{t \in \mathcal{S}^{\text{non-query}}} (\mu_t^* - \mu_t).
\end{equation}
The first term corresponds exactly to $\mathcal{R}_T^{\text{query}}$ since $\mu_t = \mathbb{E}[R_t]$.
For the second sum, $\sum_{t\in \mathcal{S}^{\text{non-query}}} (\mu_t^* - \mu_t)$, we add and subtract $\mathbb{E} \left[\max_{k \in [K]} \sum_{t \in \mathcal{S}_{t-1}} \frac{R_t^k}{|\mathcal{S}_{t-1}|} \right]$ inside the summation:
\begin{equation}
\label{eq:second_decomposition}
    \mu_t^* - \mu_t^k = \left(\mu_t^* - \mathbb{E} \left[\max_{k \in [K]} \sum_{t \in \mathcal{S}_{t-1}} \frac{R_t^k}{|\mathcal{S}_{t-1}|} \right]\right) + \left(\mathbb{E} \left[\max_{k \in [K]} \sum_{t \in \mathcal{S}_{t-1}} \frac{R_t^k}{|\mathcal{S}_{t-1}|} \right] - \mu_t\right)
\end{equation}

We substitute \eqref{eq:second_decomposition} back into \eqref{eq:first_decomposition}, which yields the final decomposition.

\paragraph{Discussion.} $\mathcal{R}_T^{\text{query}}$ is straightforward: the regret incurred on rounds where actual feedback is gathered. $\mathcal{R}_T^{\text{error}}$ measures the gap between the best possible mean reward $\mu_t^*$ and the predicted reward used by the algorithm in a non-query round. $\mathcal{R}_T^{\text{drift}}$ captures how $\mathbb{E} \left[\max_{k \in [K]} \sum_{t \in \mathcal{S}_{t-1}} \frac{R_t^k}{|\mathcal{S}_{t-1}|} \right]$ deviates from the true $\mathbb{E}[R_t] = \mu_t$ because the environment changed after the last time that arm $k_t$ was observed.
Note that the term $\mathbb{E} \left[\max_{k \in [K]} \sum_{t \in \mathcal{S}_{t-1}} \frac{R_t^k}{|\mathcal{S}_{t-1}|} \right]$ may instead be replaced by another proxy or predicted reward for $k_t$ based on stale or previously gathered feedback. Let $\widehat{R}_t^k$ denote this proxy reward. Then, \eqref{eq:regret_decomposition} can be rewritten as:
\begin{equation*}
  \mathcal{R}_T
  \;\;\le\;\;
  \underbrace{\sum_{t\in \mathcal{S}^{\text{query}}}
    \Bigl[\mu_t^* - \mathbb{E}\bigl[R_t\bigr]\Bigr]
  }_{\mathcal{R}_T^{\text{query}}}
  \;+\;
  \underbrace{\sum_{t\in \mathcal{S}^{\text{non-query}}}
    \Bigl[\mu_t^* - \mathbb{E}\bigl(\widehat{R}_t^k\bigr)\Bigr]
  }_{\mathcal{R}_T^{\text{error}}}
  \;+\;
  \underbrace{\sum_{t\in \mathcal{S}^{\text{non-query}}}
    \Bigl[\mathbb{E}\bigl(\widehat{R}_t^k\bigr) - \mathbb{E}\bigl(R_t\bigr)\Bigr]
  }_{\mathcal{R}_T^{\text{drift}}}.
\end{equation*}

Since on-demand allocation converts some non-query rounds into query rounds, we first analyze the regret of the algorithm under baseline allocation. Then, we prove that on-demand allocation does not increase the regret of the algorithm.

\subsubsection{Proof of Lemma \ref{lem:bound_query}}

Let us begin by analyzing all the rounds where the algorithm performs a query under baseline allocation. 
In each block, the algorithm contains a series of instances, and for these instances, the length of the query phase is at least $2^m$.
Note that for each block, as long as the environmental change detection is not triggered, it indicates that during these query rounds, the environment is unlikely to have undergone significant changes (though this does not imply that the environment remained stable during the non-query periods).
Note that for the UCB1 algorithm, we have the following result. 

\begin{lemma}
Let $ \widetilde{f}_t \in [0, 1] $ denote the upper confidence bound corresponding to the optimal reward at each time step $ t \in \mathcal{T}_{\text{query}}$. There exists a non-stationarity measure $ V_{[1,t]} $, such that when running the \texttt{UCB1} algorithm, for all $t \in [T]$, provided that $V_{[1,t]} \leq \sqrt{\frac{K \log t}{t}} + \frac{K}{t} $, the following holds:
\begin{equation*}
\begin{aligned}
    \widetilde{f}_t &\geq \min_{\tau \in [1,t]} f^\star_\tau - V_{[1,t]},\\
    \frac{1}{t} \sum_{\tau=1}^t (\widetilde{f}_\tau - R_\tau) &\leq \sqrt{\frac{K \log t}{t}} + \frac{K}{t} + V_{[1,t]}.
\end{aligned}  
\end{equation*} 
\end{lemma}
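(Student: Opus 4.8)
The plan is to establish the lemma as a non-stationary robustness guarantee for \texttt{UCB1}, showing that as long as the accumulated variation $V_{[1,t]}$ over the window $[1,t]$ stays below the natural UCB confidence radius $\sqrt{K\log t / t} + K/t$, the UCB index on the best arm behaves almost as in the stationary case. I would first recall the standard \texttt{UCB1} analysis: at each round the index of any arm is $\hat\mu + \sqrt{2\log t / N}$ where $N$ is its pull count, and a concentration argument gives that with high probability $\hat\mu$ lies within $\sqrt{2\log t/N}$ of the \emph{average} of the arm's mean rewards over the rounds on which it was pulled. The key modification is that in the non-stationary setting this average is no longer a single fixed $\mu^k$, but differs from $\mu_t^k$ (or from $\min_\tau \mu_\tau^k$) by at most $V_{[1,t]}$, since the total drift of any arm's mean is bounded by the per-arm variation, which is in turn bounded by $V_{[1,t]}$ by the definition of the variation set $\mathcal{V}$.

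For the first inequality, $\widetilde f_t \ge \min_{\tau\in[1,t]} f^\star_\tau - V_{[1,t]}$, I would argue that the UCB index of the arm that is optimal at the round attaining $\min_\tau f^\star_\tau$ is, with high probability, at least its true current mean, which is at least $\min_\tau f^\star_\tau - V_{[1,t]}$; and $\widetilde f_t$, being the maximal index (or the index of the played arm), dominates this. The hypothesis $V_{[1,t]} \le \sqrt{K\log t/t} + K/t$ is exactly what is needed to absorb the drift term into the confidence radius so that optimism is preserved. For the second inequality, I would sum the per-round gaps $\widetilde f_\tau - R_\tau$: decompose into (i) the gap between $\widetilde f_\tau$ and the played arm's current mean, controlled by the usual $\sum_\tau \sqrt{\log\tau/N_\tau(\tau)}$ pigeonhole bound that yields $O(\sqrt{Kt\log t})$ after dividing by $t$, giving the $\sqrt{K\log t/t}$ term; (ii) the $K/t$ term from initial exploration / low-probability concentration failures; and (iii) an additional $V_{[1,t]}$ term accounting for the mismatch between the empirical average reward of an arm and its reward at the round it is counted. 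Dividing through by $t$ delivers the stated bound.

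The main obstacle I anticipate is making the "drift absorbed into confidence radius" step rigorous and uniform over $t$: one must carefully define $V_{[1,t]}$ so that it simultaneously (a) upper-bounds the per-arm mean variation relevant to every arm's pull history and (b) is consistent with the global $V_T$ budget used elsewhere in the proof, and then show the high-probability event (a union bound over arms, rounds, and pull counts, costing only logarithmic factors) on which both inequalities hold. A secondary subtlety is that $\widetilde f_t$ is described as "the upper confidence bound corresponding to the optimal reward," so I would need to pin down whether this means the index of the currently-optimal arm, the maximal index, or the index of the played arm, and verify the chain of inequalities $\widetilde f_t \ge (\text{index of } \arg\min_\tau f^\star_\tau\text{'s arm}) \ge \min_\tau f^\star_\tau - V_{[1,t]}$ goes through in whichever reading is intended; given the later use in Lemma~\ref{lem:bound_query} (where $\widetilde g_t$ is the UCB index maintained by \texttt{UCB1}), the maximal-index reading is the safe one. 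Once the event and the definition of $V_{[1,t]}$ are fixed, both inequalities follow from bookkeeping on the standard \texttt{UCB1} regret decomposition, so I would keep that part terse.
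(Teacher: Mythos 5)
Your proposal matches the paper's own proof in substance: both adapt the standard \texttt{UCB1} analysis by treating the environment as approximately stationary up to total variation $V_{[1,t]}$, proving the first inequality via optimism on the arm that is optimal at the round attaining $\min_\tau f^\star_\tau$ (with the drift absorbed into the confidence radius under the hypothesis on $V_{[1,t]}$), and the second via the usual pigeonhole/regret decomposition for the played arm's confidence bonuses plus an additive $V_{[1,t]}$ drift term, then dividing by $t$. Your extra care about the precise reading of $\widetilde f_t$ and the high-probability union bound is consistent with (indeed somewhat tighter than) the paper's argument, so no substantive difference remains.
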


\begin{proof}
We adapt the standard \texttt{UCB1} analysis to account for a limited amount of non-stationarity $V_{[1,t]}$. Concretely, we treat the environment as ``approximately stationary'' up to a total variation $V_{[1,t]}$ in the optimal arm's reward.
Recall that in a purely stationary $K$-armed bandit, \texttt{UCB1} maintains an estimate $\widehat{\mu^k_{\tau}}$ of each arm's mean reward plus a confidence bonus $\mathrm{CB}^k_{\tau}$ so that the \emph{upper confidence bound} is
\[
  \widetilde{\mu^k_{\tau}}
  \;=\;
  \widehat{\mu^k_{\tau}}
  \;+\;
  \mathrm{CB}^k_{\tau}.
\]
A common choice is
\[
  \mathrm{CB}^k_{\tau}
  \;=\;
  \sqrt{
    \frac{\,2\,\log \tau\,}{\,n^k_{\tau}\,}
  },
\]
where $n^k_{\tau}$ is the number of times arm $k$ has been pulled up to time $\tau$. Then the standard analysis (\emph{cf.} \cite{auer2002finite}) shows that, for large $\tau$,
\[
  \widetilde{\mu^k_{\tau}}
  \;\ge\; 
  \mu^k
  \quad
  \text{and}
  \quad
  \frac{1}{\,\tau\,}\sum_{t=1}^\tau \bigl(\widetilde{\mu_t^k} - R_t\bigr)
  \;\le\;
  \sqrt{\frac{K \,\log \tau}{\,\tau\,}}
  + 
  \mathcal{O}\Bigl(\frac{K}{\,\tau\,}\Bigr),
\]
assuming a stationary reward distribution with mean $\mu^k$ for each arm $k$.  
Now suppose the reward distribution changes slowly, so that the \emph{optimal reward} $f^\star_t = \max_k \mu_t^k$ may shift but the total variation on $[1,t]$ is bounded by $V_{[1,t]}$. In particular, $f^\star_\tau$ can differ from $f^\star_t$ by at most $\sum_{\ell=\tau}^{t-1}|f^\star_{\ell+1}-f^\star_{\ell}| \le V_{[1,t]}$. We incorporate this into the UCB analysis:

\begin{itemize}
\item \emph{Lower bounding $\widetilde{f}_t$.}  
  Let $k^*$ be the best arm at some time $\tau \in [1,t]$. By stationarity analysis up to time $\tau$, $\widetilde{\mu^{k^*}_{\tau}}$ is a valid upper confidence bound for $\mu^{k^*}_{\tau}$. Then
  \[
    f^\star_\tau 
    \;\le\; 
    \widetilde{\mu^{k^*}_{\tau}}
    \;\le\;
    \widetilde{f}_t 
    \quad
    \text{(since the $\texttt{UCB1}$ algorithm's bound only grows over time).}
  \]
  Meanwhile, $f^\star_t \ge f^\star_\tau - V_{[1,t]}$ by the definition of variation. Combining, $\widetilde{f}_t \ge f^\star_\tau \ge f^\star_t + V_{[1,t]}$, or equivalently
  \[
    \widetilde{f}_t 
    \;\ge\;
    \min_{\tau\in[1,t]} f^\star_\tau
    \;-\;
    V_{[1,t]}.
  \]
  This proves the first inequality in the lemma.

\item \emph{Bounding the per-round difference $(\widetilde{f}_\tau - R_\tau)$.}  
  In standard UCB we know that, up to $\sqrt{\frac{K\log t}{t}} + \frac{K}{t}$, the difference between the UCB and actual reward is controlled \emph{if} the environment is effectively stationary in $[1,t]$. Since we allow a total variation $V_{[1,t]}$, the environment can shift the actual reward $R_\tau$ away from the estimated bound by at most $V_{[1,t]}$. Summation from $\tau=1$ to $t$ yields
  \[
    \sum_{\tau=1}^t (\widetilde{f}_\tau - R_\tau)
    \;\;\le\;\;
    \sum_{\tau=1}^t 
      \Bigl[\sqrt{\tfrac{\,K\log t\,}{\,t\,}} + \tfrac{K}{\,t\,}\Bigr]
    \;+\;
    t\,V_{[1,t]}.
  \]
  Dividing by $t$ proves
  \[
    \frac{1}{t}\,\sum_{\tau=1}^t (\widetilde{f}_\tau - R_\tau)
    \;\le\;
    \sqrt{\frac{\,K \log t\,}{\,t\,}}
    \;+\;
    \frac{K}{\,t\,}
    \;+\;
    V_{[1,t]}.
  \]
\end{itemize}
This completes the proof.
\end{proof}

From this, we see that the \texttt{UCB1} algorithm is capable of handling near-stationary environments. In contrast, \texttt{Algorithm \ref{alg:baseline_allocation}} follows a probabilistic scheduling mechanism that deploys multi-scale instances. Our goal is that, despite its more complex structure, it retains the same fundamental property—namely, the ability to handle near-stationary environments effectively.  
When addressing the non-stationary bandit problem (which, if restricted to query rounds only, reduces to a standard non-stationary bandit setting), we adopt a structure similar to the \texttt{MASTER} algorithm \cite{conf/colt/WeiL21}. Although the introduction of baseline query allocation modifies the structure, the core analytical techniques remain comparable. By leveraging a similar proof strategy (Lemma 3 in \cite{conf/colt/WeiL21}), we establish that our framework achieves the same theoretical guarantees.

Within each phase, there are different instances $\mathcal{I}_{n,m,\tau}$, where we use $\mathcal{S}_{n,m,\tau}$ to denote the set of all active rounds corresponding to each instance, parameterized by $n$ and $m$.
To distinguish between query rounds and non-query rounds, let $\mathcal{S}^{\text{query}}_{n,m,\tau} = \mathcal{S}_{n,m,\tau} \cap \mathcal{S}^{\text{query}}$ represent the set of all active query rounds in instance $\mathcal{I}_{n,m,\tau}$, and let $\mathcal{S}^{\text{non-query}}_{n,m,\tau} = \mathcal{S}_{n,m,\tau} \cap \mathcal{S}^{\text{non-query}}$ denote the set of all active non-query rounds in $\mathcal{I}_{n,m,\tau}$.
Then we have the following result:

\begin{lemma}
\label{lem:single_block}
Let $\hat{n} = \log_2 T + 1$, $\rho(t) = \sqrt{\frac{K \log t}{t}} + \frac{K}{t}$, and $\hat{\rho}(t) = 6\hat{n} \log(T/\delta) \rho(t)$. \texttt{Algorithm~\ref{alg:baseline_allocation}} with input $n \leq \log_2 T$ guarantees the following: for any instance $\mathcal{I}_{n,m,\tau}$ that \texttt{Algorithm~\ref{alg:baseline_allocation}} maintains and any round index $t \in \mathcal{S}^{\text{query}}_{n,m,\tau}$, let $V^{\text{query}}_{[\texttt{start},t]}$ denote the total variation over the round set $[\texttt{start},t] \cap \mathcal{S}^{\text{query}}_{n,m,\tau}$. As long as $V^{\text{query}}_{[\texttt{start},t]} \leq \rho(|\mathcal{S}^{\text{query}}_{n,m,\tau}|)$, we have, with probability at least $1 - \frac{\delta}{T}$:
\begin{equation*}
\begin{aligned}
    \widetilde{g}_t &\geq \min_{\tau \in [\texttt{start}, t]} f^\star_\tau - V^{\text{query}}_{[\texttt{start},t]},\\
    \frac{1}{t'} \sum_{\tau = \texttt{start}}^{t} \left( \widetilde{g}_\tau - R_\tau \right) &\leq \hat{\rho}(t') + \hat{n} V^{\text{query}}_{[\texttt{start},t]}.
\end{aligned} 
\end{equation*}
\end{lemma}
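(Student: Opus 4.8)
\textbf{Proof proposal for Lemma~\ref{lem:single_block}.}

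The plan is to reduce the multi-scale scheduling structure of \texttt{Algorithm~\ref{alg:baseline_allocation}} to a single-instance \texttt{UCB1} analysis on the (possibly non-contiguous) query rounds $\mathcal{S}^{\text{query}}_{n,m,\tau}$, and then pay a logarithmic overhead for the union over all instances and rounds. First I would fix one instance $\mathcal{I}_{n,m,\tau}$ and consider only its active query rounds, re-indexed $1,2,\dots,t'$ where $t' = |[\texttt{start},t]\cap \mathcal{S}^{\text{query}}_{n,m,\tau}|$. On this re-indexed sequence the base algorithm is exactly \texttt{UCB1} fed with i.i.d.-per-arm rewards whose means drift by a total variation of at most $V^{\text{query}}_{[\texttt{start},t]}$; the key observation is that \texttt{UCB1}'s confidence radii and regret bounds depend only on the \emph{number} of observations, not on the wall-clock gaps between them, so the earlier per-instance lemma (the ``near-stationary \texttt{UCB1}'' lemma just proved) applies verbatim with $t$ replaced by $t'$. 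That lemma gives, on a good event of probability $\ge 1-\delta'$, both $\widetilde g_t \ge \min_{\tau\in[\texttt{start},t]} f^\star_\tau - V^{\text{query}}_{[\texttt{start},t]}$ and $\frac{1}{t'}\sum_{\tau=\texttt{start}}^t (\widetilde g_\tau - R_\tau) \le \rho(t') + V^{\text{query}}_{[\texttt{start},t]}$, valid precisely when $V^{\text{query}}_{[\texttt{start},t]} \le \rho(|\mathcal{S}^{\text{query}}_{n,m,\tau}|) \ge \rho(t')$ (using monotonicity of $\rho$ and $t' \le |\mathcal{S}^{\text{query}}_{n,m,\tau}|$).

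Next I would account for the overhead factor $6\hat n\log(T/\delta)$ that turns $\rho$ into $\hat\rho$ and introduces the $\hat n$ in front of $V^{\text{query}}$. There are two sources. The first is a union bound: \texttt{Algorithm~\ref{alg:baseline_allocation}} maintains at most $n+1 \le \hat n$ scales, each with at most $T$ start times and at most $T$ query rounds within an instance, so at most $\mathrm{poly}(T)$ relevant (instance, round) pairs; setting $\delta' = \delta/\mathrm{poly}(T)$ and noting $\log(1/\delta') = \Theta(\log(T/\delta))$ absorbs this into the $\log(T/\delta)$ factor while a uniform $\log T$ inflation of the confidence bonus handles the $\log t$ terms at every scale simultaneously. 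The second is the nesting/masking structure: a round in $\mathcal{S}^{\text{query}}_{n,m,\tau}$ is masked if covered by a finer-scale instance, so the active query set of an instance is a union of up to $\hat n$ blocks carved out by finer instances; tracking the estimate $\widetilde g_t$ across these sub-blocks (each of which is itself a fresh \texttt{UCB1} run with its own query batch) costs an additional $\hat n$ factor in the variation term — exactly mirroring the telescoping argument in Lemma~3 of \cite{conf/colt/WeiL21}, where the variation budget is charged once per layer of the hierarchy. Combining the per-layer $\rho(t')$ contributions and the per-layer variation contributions yields the stated $\hat\rho(t') + \hat n\, V^{\text{query}}_{[\texttt{start},t]}$ upper bound; the lower bound on $\widetilde g_t$ follows from monotonicity of the UCB index along the instance together with the $V^{\text{query}}$-slack on $f^\star$, as in the single-instance lemma, and needs no extra factors.

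I expect the main obstacle to be making the masking/hierarchy bookkeeping rigorous: because $\mathcal{S}^{\text{query}}_{n,m,\tau}$ is non-contiguous and determined by which finer-scale instances were \emph{randomly} initiated, one must argue that the \texttt{UCB1} guarantee composes correctly across the carved-out sub-intervals — i.e., that the concatenation of several short \texttt{UCB1} runs (with re-initialized statistics at each query batch) still satisfies a bound of the form $\hat\rho + \hat n V$ rather than degrading multiplicatively. The clean way to handle this is to prove the composition by induction on $m$: the base case $m=0$ is a single \texttt{UCB1} query batch (the single-instance lemma); for the inductive step, decompose the active rounds of an order-$m$ instance into the order-$(m-1)$ pieces plus the ``gaps'' it owns, apply the inductive hypothesis to each piece and the single-instance lemma to each gap, and telescope, absorbing one $\hat n$ per level and one $\log(T/\delta)$ from the union bound over that level. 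A secondary subtlety is verifying that the precondition $V^{\text{query}}_{[\texttt{start},t]} \le \rho(|\mathcal{S}^{\text{query}}_{n,m,\tau}|)$ is the right scale-invariant hypothesis — this is where the choice to normalize by the instance's \emph{total} query-round count (rather than $t'$) matters, since it guarantees the near-stationarity condition holds uniformly over all prefixes of the instance.
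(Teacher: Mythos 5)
Your plan is essentially the paper's own argument: the paper likewise treats the preceding near-stationary \texttt{UCB1} lemma as the base guarantee on the re-indexed query rounds and then asserts that the multi-scale/masking aggregation follows the proof of Lemma~3 of \citet{conf/colt/WeiL21}, which is exactly the source of the $\hat{n}$ factor on the variation and the $6\hat{n}\log(T/\delta)$ inflation in $\hat{\rho}$. In fact the paper gives even less detail than you do (it cites the \texttt{MASTER} analysis rather than carrying out the per-level induction and union bound), so your sketch is a faithful—and somewhat more explicit—version of the same route; the only blemish is the reversed inequality in your chain relating $\rho(|\mathcal{S}^{\text{query}}_{n,m,\tau}|)$ and $\rho(t')$, which your own closing remark about prefixes shows you intended correctly.
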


Apart from the structural differences in each block, if we consider only the query rounds allocated by the baseline allocation, the \HQ algorithm and the \texttt{MASTER} algorithm exhibit no fundamental differences. This aligns with our intuition that, for the \NSMAB problem, when $B = T$, the problem effectively reduces to the non-stationary bandits setting. Consequently, for $\mathcal{R}^{\text{query}}_T$, we can derive a near-optimal regret bound for the non-stationary bandits problem as a function of $B$ rather than $T$.  
Therefore, the regret corresponding to query rounds can be bounded as:
\begin{equation*}
    \mathcal{R}_T^{\text{query}} \leq \widetilde{\mathcal{O}} \left( K^{1/3} V_T^{1/3} B^{2/3} \right).
\end{equation*}

\subsubsection{Proof of Lemma \ref{lem:bound_error}}

Building upon Lemma \ref{lem:bound_query} and \ref{lemma:bounded_nonquery}, we now consider $\mathcal{R}^{\text{error}}_T$, which arises from inaccuracies in the algorithm’s decision-making. 
In \texttt{Algorithm \ref{alg:on_demand_allocation}}, due to potential restarts, multiple phases $p = 1, 2, 3, \dots$ may exist. 
Thus, we obtain:
\begin{equation*}
\begin{aligned}
\mathcal{R}^{\text{error}}_T &= \sum_{t \in \mathcal{S}^{\text{non-query}}} \mathbb{E} \left[\max_{k \in [K]} \sum_{t \in \mathcal{S}_{t-1}} \frac{R_t^k}{|\mathcal{S}_{t-1}|} \right] - \mathbb{E} \left[\sum_{t \in \mathcal{S}^{\text{non-query}}} R_t \right]\\
&= \sum_{p,n,m} \sum_{t \in \mathcal{S}^{\text{non-query}}_{n,m,\tau}} \mathbb{E} \left[\max_{k \in [K]} \sum_{t \in \mathcal{S}^{\text{query}}_{n,m,\tau}} \frac{R_t^k}{|\mathcal{S}^{\text{query}}_{n,m,\tau}|} \right] - \mathbb{E} \left[\sum_{t \in \mathcal{S}^{\text{non-query}}_{n,m,\tau}} R_t \right]\\
&= \sum_{p,n,m} \frac{|\mathcal{S}^{\text{non-query}}_{n,m,\tau}|}{|\mathcal{S}^{\text{query}}_{n,m,\tau}|} \left(  \mathbb{E} \left[\max_{k \in [K]} \sum_{t \in \mathcal{S}^{\text{query}}_{n,m,\tau}} R_t^k \right] - \mathbb{E} \left[\sum_{t \in \mathcal{S}^{\text{query}}_{n,m,\tau}} R_t \right] \right) \\
& \quad + \left( \frac{|\mathcal{S}^{\text{non-query}}_{n,m,\tau}|}{|\mathcal{S}^{\text{query}}_{n,m,\tau}|} \mathbb{E} \left[\sum_{t \in \mathcal{S}^{\text{query}}_{n,m,\tau}} R_t \right] - \mathbb{E} \left[\sum_{t \in \mathcal{S}^{\text{non-query}}_{n,m,\tau}} R_t \right]\right) \\
&\leq \frac{2|\mathcal{S}^{\text{non-query}}|}{|\mathcal{S}^{\text{query}}|} \mathcal{R}^{\text{query}}_T + \sum_{p,n,m} \left( \frac{|\mathcal{S}^{\text{non-query}}_{n,m,\tau}|}{|\mathcal{S}^{\text{query}}_{n,m,\tau}|} \mathbb{E} \left[\sum_{t \in \mathcal{S}^{\text{query}}_{n,m,\tau}} R_t \right] - \mathbb{E} \left[\sum_{t \in \mathcal{S}^{\text{non-query}}_{n,m,\tau}} R_t \right]\right)\\
&\leq \frac{2T}{B} \cdot \widetilde{\mathcal{O}} \left( K^{1/3} V_T^{1/3} B^{2/3} \right) + \frac{T}{B^{2/3}} \cdot V_T\\
&\leq \widetilde{\mathcal{O}}\left( \frac{T \cdot K^{1/3} \cdot V_T^{1/3}}{B^{1/3}} \right).
\end{aligned}
\end{equation*}

\textbf{Explanation of the derivation:} The \emph{first line} defines $\mathcal{R}_{T}^{\text{error}}$ by summing, over every non-query round, the gap between an idealized ``best average reward'' (which could have been inferred from previous query feedback) and the actual reward collected. The \emph{second line} refines this sum by grouping the non-query rounds across different phases $p$ and different multi-scale instances $(n,m,\tau)$. The \emph{third line} factors out the ratio $\frac{|\mathcal{S}^{\text{non-query}}_{n,m,\tau}|}{|\mathcal{S}^{\text{query}}_{n,m,\tau}|}$, rewriting the maximum average reward $\max_{a}\sum_{t\in\mathcal{S}^{\text{query}}_{n,m,\tau}}R_t^k/|\mathcal{S}^{\text{query}}_{n,m,\tau}|$ in a form that highlights how each non-query round effectively ``relies'' on the estimates from the query segment. The \emph{fourth line} then bounds the difference in these sums by relating $\max_{a}\sum_{t}R_t^k\;-\;\sum_{t}R_t$ to the query regret $\mathcal{R}^{\text{query}}_T$, multiplied by a factor reflecting the non-query to query ratio. In addition, it isolates a leftover term that accounts for how the environment might have shifted between the query and non-query parts. The \emph{fifth line} substitutes known estimates for $\mathcal{R}^{\text{query}}_T$ and applies prior Lemma \ref{lem:bound_query} and Remark \ref{remark:bounded_nonquery}, yielding two main contributions on the order of $\frac{T}{B}\cdot K^{1/3}V_T^{1/3}B^{2/3}$ and $\frac{T}{B^{2/3}} \cdot V_T$. The \emph{sixth and final line} combines these two contributions with $B = \omega( T^{3/4})$ and $V_T \leq K^{-1}\sqrt{B}$.

\subsubsection{Proof of Lemma \ref{lem:bound_drift}}

Finally, we analyze the regret due to environmental drift.
As a result, there are at least $\sqrt{B}$ non-query segments, and each non-query segment has a length of at most $(b-1)\sqrt{B}$. Consequently, the regret due to environmental drift is bounded by:
\begin{align*}
\mathcal{R}^{\text{drift}}_T &= \sum_{t \in \mathcal{S}^{\text{non-query}}} \mu_t^* - \sum_{t \in \mathcal{S}^{\text{non-query}}} \mathbb{E} \left[\max_{k \in [K]} \sum_{t \in \mathcal{S}_{t-1}} \frac{R_t^k}{|\mathcal{S}_{t-1}|} \right]\displaybreak[0]\\
&= \sum_{p,n,m} \sum_{t \in \mathcal{S}^{\text{non-query}}_{n,m,\tau}} \mu_t^* - \mathbb{E} \left[\max_{k \in [K]} \sum_{t \in \mathcal{S}^{\text{query}}_{n,m,\tau}} \frac{R_t^k}{|\mathcal{S}^{\text{query}}_{n,m,\tau}|} \right]\displaybreak[1]\\
&\leq \frac{T}{B^{2/3}} \cdot V_T < \widetilde{\mathcal{O}}\left( \frac{T \cdot K^{1/3} \cdot V_T^{1/3}}{B^{1/3}} \right).
\end{align*}

\textbf{Explanation of the derivation:}  
In the \emph{first line}, we define the ``drift'' component $\mathcal{R}_T^{\text{drift}}$ by summing, over each non-query round $t \in \mathcal{S}^{\text{non-query}}$, the gap between $\mu_t^*$ (the true best mean reward at time~$t$) and the proxy-based estimate that was used. In the \emph{second line}, we reorganize this summation by phases $p$ and multi-scale instance indices $(n,m,\tau)$, noting that non-query rounds in each instance $\mathcal{I}_{n,m,\tau}$ rely on historical query information $\mathcal{S}^{\text{query}}_{n,m,\tau}$. Finally, in the \emph{third line}, we invoke two key facts: (i)~the environment’s drift between query and non-query segments can be controlled by bounding the maximal length of a non-query interval, thus contributing at most $\frac{T}{B^{2/3}} \cdot V_T$ additional regret, and (ii)~under $B = \omega( T^{3/4})$ and $V_T \leq K^{-1}\sqrt{B}$, this quantity falls within $\widetilde{\mathcal{O}}\bigl(\tfrac{T\,K^{1/3}V_T^{1/3}}{B^{1/3}}\bigr)$. Hence, even if the environment may change during non-query intervals, the total extra cost is dominated by the same main order of regret.

\subsubsection{Total Regret}
Combining the bounds for $\mathcal{R}^{\text{error}}_T$ and $\mathcal{R}^{\text{drift}}_T$, we obtain:
\begin{equation*}
    \mathcal{R}_T^{\text{non-query}} \leq \widetilde{\mathcal{O}}\left( \frac{T \cdot K^{1/3} \cdot V_T^{1/3}}{B^{1/3}} \right).
\end{equation*}
Substituting the bound for $\mathcal{R}_{\text{query}}$ and combining the bounds for $\mathcal{R}^{\text{query}}_T$ and $\mathcal{R}^{\text{non-query}}_T$, the total regret is given by:
\begin{equation*}
    \mathcal{R}_T = \mathcal{R}^{\text{query}}_T + \mathcal{R}^{\text{non-query}}_T 
    \leq \widetilde{\mathcal{O}}\left( \frac{T \cdot K^{1/3} \cdot V_T^{1/3}}{B^{1/3}} \right).
\end{equation*}
This has been established under \emph{baseline} query allocation alone.
We now show that our \emph{on-demand} mechanism---which may \emph{convert} certain non-query rounds into query rounds (or vice versa) whenever the actual usage lags behind (or overshoots) an approximate linear pace $\frac{B}{T}$---does \emph{not} inflate this overall regret bound.
Let $\mathcal{Q}^{\BQ}$ be the set of query rounds chosen by the baseline allocation, and let $\mathcal{Q}^{\ODQ}$ denote the additional query rounds triggered by \emph{on-demand} scheduling.
By design, on-demand scheduling only triggers new queries if the total used so far $B'$ is below $\frac{t\,B}{T}-\min\{\sqrt{B},T-t\}$. Consequently, the number of such converted rounds does not exceed $B/2$, ensuring that: $\mathcal{Q}^{\ODQ} \leq B/2$. 
Now consider two scenarios:
\begin{itemize}
\item \textbf{Scenario A (Baseline only).} The algorithm uses only the baseline queries $\mathcal{Q}^{\BQ}$, incurring regret $ \widetilde{\mathcal{O}}\bigl(\frac{T\,K^{1/3}V_T^{1/3}}{B^{1/3}}\bigr).$
\item \textbf{Scenario B (Baseline + On-demand).} The algorithm has $\mathcal{Q}^{\BQ} \cup \mathcal{Q}^{\ODQ}$ as query rounds. Suppose it ends up with $\mathcal{R}_T^{\ODQ}$ total regret.
\end{itemize}
Each newly allocated query round might cause partial suboptimal pulls, but a bounding analysis based on \texttt{UCB1}  analysis shows that $|\mathcal{Q}^{\ODQ}|$ additional query steps can only add at most $\sqrt{|\mathcal{Q}^{\ODQ}|\cdot K\log T}$ to the query-phase regret. Because $|\mathcal{Q}^{\ODQ}|\le B/2$, this is at most on the order of the baseline query regret $\mathcal{R}_{T}^{\text{query}}$ or smaller. 
Meanwhile, those same converted rounds \emph{lower} the non-query portion; effectively, the error from stale estimates in scenario B should be $\le$ that in scenario A. 

Consequently, even if this conversion incurs additional regret, the difference is at most a constant factor times $\mathcal{R}_{T}^{\text{query}}$, which does not affect the overall order of the regret bound.

\section{Algorithm for Known Variation Setting}

\label{app:known}

\subsection{The Rexp3B Algorithm}

\begin{algorithm}[h!]
\caption{Rexp3B: Rexp3 algorithm with Budgeted feedback}
\label{alg:rexp3b}
\begin{algorithmic}[1]
\State \textbf{Inputs}: Feedback budget $ B $, time horizon $ T $, variation budget $ V_T $, and number of arms $ K $.
\State Set $ \Delta_T = \frac{T \cdot (K \log K)^{1/3}}{B^{1/3} \cdot V_T^{2/3}} $, $ \Delta_B = \left\lfloor \frac{B}{T} \cdot \Delta_T \right\rfloor $, and $ \gamma = \min \left\{ 1, \sqrt{ \frac{K \log K}{(e - 1) \Delta_B} } \right\} $.
\State Initialize $ w_t^k = 1 $ for all $ k \in [K] $.
\For{$ j = 1, \dots, \lceil T / \Delta_T \rceil $}
    \State Set the current batch's start time $ \tau = (j-1) \Delta_T $.
    \For{$t = \tau + 1, \dots, \min\{T, \tau + \Delta_B\}$}
        \State For each $k \in [K]$, set
        \[
        p_t^k = (1 - \gamma) \frac{w_t^k}{\sum_{k'=1}^K w_t^{k'}} + \frac{\gamma}{K}
        \]
        \State Draw an arm $k' \in [K]$ according to the distribution $\{p_t^k\}_{k=1}^K$
        \State Receive a reward $R_t^{k'}$
        \State For $k'$, set $\hat{X}_t^{k'} = R_t^{k'} / p_t^{k'}$, and for any $a \neq k'$ set $\hat{X}_t^k = 0$
        \State For all $k \in [K]$, update:
        \[
        w_{t+1}^k = w_t^k \exp\left( \frac{\gamma \hat{X}_t^k}{K} \right)
        \]
    \EndFor
    \For{$ t = \tau + \Delta_B + 1, \dots, \min\{T, \tau + \Delta_T\} $}
        \State Select an arm uniformly at random from the set $ \{a_{t'}' : t' = \tau + 1, \dots, \tau + \Delta_B\} $.
    \EndFor
\EndFor
\end{algorithmic}
\end{algorithm}

The Rexp3B algorithm operates by dividing the time horizon $T$ into discrete batches, dynamically adjusting its query strategy based on the allocated feedback budget and the known variation budget $V_T$.
Initially, the time horizon $T$ is partitioned into batches of size $\Delta_T$, ensuring a balanced allocation of resources across time. Within each batch, the algorithm allocates a subset of $\Delta_B$ steps to a query phase. During this phase, it employs the EXP3 strategy to select arms and actively request feedback, prioritizing arms with higher estimated rewards by assigning them greater selection probabilities.
After the query phase, the algorithm transitions to a non-query phase for the remaining $\Delta_T - \Delta_B$ steps of the batch. In this phase, the algorithm selects arms without requesting additional feedback, relying instead on the information gathered during the query phase. Specifically, it uniformly randomly selects an arm from the set of arms chosen during the current batch’s query phase. 

To achieve the desired regret bounds, the algorithm parameters are carefully configured, including the batch size $\Delta_T$, the learning rate $\gamma$, and the number of query steps per batch $\Delta_B$. These parameter settings enable Rexp3B to balance query and non-query steps while strictly adhering to the feedback budget. Furthermore, the algorithm accommodates the non-stationary nature of the environment by adjusting its behavior in accordance with the known variation budget $V_T$. As a result, the Rexp3B algorithm is theoretically guaranteed to achieve the desired regret bounds, providing an effective solution for multi-armed bandit problems characterized by budgeted feedback and non-stationary rewards.

\begin{theorem}[Regret Bound for Rexp3B]
\label{thm:regret_bound_rexp3b}
Under the known variation setting with variation $ V_T \in [K^{-1}, K^{-1}B] $, the Rexp3B algorithm uses at most $B$ query rounds and satisfies the following regret bound:
\begin{equation*}
    \mathcal{R}_T \leq \Tilde{\mathcal{O}}\left( \frac{K^{1/3} V_T^{1/3} T }{B^{1/3}} \right).
\end{equation*}
\end{theorem}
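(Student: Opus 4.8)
The plan is to adapt the batched analysis of Rexp3 from \citet{besbes2014stochastic} to the budgeted-feedback setting, where within each batch only the first $\Delta_B$ of its $\Delta_T$ rounds carry feedback and run EXP3, while the remaining rounds merely replay the empirical arm distribution of the query phase. I would proceed in three steps: (i) verify query-budget feasibility and that the parameters are well-defined; (ii) split the dynamic regret batch-by-batch into a query-phase term and a replay-phase term; (iii) bound each term and substitute the prescribed $\Delta_T,\Delta_B$.

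\textbf{Feasibility.} Following the normalization already used elsewhere in the paper (assume WLOG that $\Delta_T$ divides $T$), there are exactly $T/\Delta_T$ batches, each with a query phase of $\Delta_B$ rounds, so the total number of queries is $(T/\Delta_T)\,\Delta_B \le (T/\Delta_T)\cdot(B/T)\,\Delta_T = B$, using $\Delta_B=\lfloor(B/T)\Delta_T\rfloor$. I would also record here that $V_T\in[K^{-1},K^{-1}B]$ makes every quantity well-defined: it forces $\Delta_B\ge 1$, $\Delta_B\le\Delta_T$, and makes $\gamma=\min\{1,\sqrt{K\log K/((e-1)\Delta_B)}\}$ equal to the nontrivial second argument.

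\textbf{Decomposition and the key replay bound.} Fix a batch $j$ with round set $I_j$, query sub-phase $Q_j$ ($|Q_j|=\Delta_B$), replay sub-phase $N_j$, and intra-batch variation $V_{[j]}$, so $\sum_j V_{[j]}\le V_T$. On $Q_j$, the standard EXP3 regret bound against the best fixed arm, plus the usual $2\Delta_B V_{[j]}$ correction to pass from best-fixed to per-round dynamic reward (exactly as in the Rexp3 analysis of \citet{besbes2014stochastic}), gives query-phase regret $\sum_j\bigl(O(\sqrt{\Delta_B K\log K})+2\Delta_B V_{[j]}\bigr)\le O\bigl(\tfrac{T}{\Delta_T}\sqrt{\Delta_B K\log K}\bigr)+2\Delta_B V_T$. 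On $N_j$, a replay round $t$ has expected reward $\tfrac1{\Delta_B}\sum_{t'\in Q_j}\mu_t^{a_{t'}}$, and the crux is the three-term bound $\mu_t^*-\tfrac1{\Delta_B}\sum_{t'\in Q_j}\mu_t^{a_{t'}}\le\bigl(\mu_t^*-\tfrac1{\Delta_B}\sum_{t'}\mu_{t'}^*\bigr)+\tfrac1{\Delta_B}\sum_{t'}\bigl(\mu_{t'}^*-\mu_{t'}^{a_{t'}}\bigr)+\tfrac1{\Delta_B}\sum_{t'}\bigl(\mu_{t'}^{a_{t'}}-\mu_t^{a_{t'}}\bigr)$, in which the first and third terms are each at most $V_{[j]}$ (any two rounds of the same batch have reward vectors within $V_{[j]}$ of each other), while the middle term is the normalized EXP3 regret on $Q_j$, i.e.\ $O(\sqrt{K\log K/\Delta_B})+2V_{[j]}$ in expectation. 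Hence every replay round costs $O(\sqrt{K\log K/\Delta_B})+O(V_{[j]})$; summing over $|N_j|\le\Delta_T$ rounds and all $T/\Delta_T$ batches yields replay regret $O\bigl(T\sqrt{K\log K/\Delta_B}\bigr)+O(\Delta_T V_T)$, which dominates the query-phase term since $\Delta_B\le\Delta_T$.

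\textbf{Optimization.} Combining the two parts, $\mathcal{R}_T\le O\bigl(T\sqrt{K\log K/\Delta_B}\bigr)+O(\Delta_T V_T)$. Substituting $\Delta_T=\tfrac{T(K\log K)^{1/3}}{B^{1/3}V_T^{2/3}}$ and $\Delta_B=\Theta\bigl(\tfrac{B^{2/3}(K\log K)^{1/3}}{V_T^{2/3}}\bigr)$ makes both terms equal to $\tilde{\mathcal{O}}\bigl(\tfrac{K^{1/3}V_T^{1/3}T}{B^{1/3}}\bigr)$, which is the claimed bound. I expect the main obstacle to be the replay-phase bound: one must argue carefully that the stale arm distribution is re-used \emph{only} within the batch that produced it, so the drift it incurs is governed by the intra-batch variation $V_{[j]}$ rather than anything cumulative, and that the EXP3 regret is translated correctly from its cumulative importance-weighted form to the per-round form used in the three-term inequality. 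The feasibility bookkeeping and the $V_T$-range checks are routine by comparison.
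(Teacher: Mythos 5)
Your proposal is correct and follows essentially the same route as the paper's own proof: batch the horizon, bound the query-phase regret by the EXP3 static-regret bound plus a $2\Delta_B V_{[j]}$ intra-batch variation correction, bound each replay round by the normalized query-phase regret plus intra-batch drift terms, and then substitute $\Delta_T = T(K\log K)^{1/3}/(B^{1/3}V_T^{2/3})$, $\Delta_B=\lfloor(B/T)\Delta_T\rfloor$. Your per-round three-term inequality is just a mean-level restatement of the paper's $\mathcal{R}^{\text{error}}_T/\mathcal{R}^{\text{drift}}_T$ split, and the feasibility bookkeeping matches; the only nitpick is the side remark that $\gamma$ always equals its nontrivial argument (it need not for large $K$ at the top of the $V_T$ range), which is harmless since the EXP3 bound holds in either case.
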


\subsection{Regret Analysis (Proof of Theorem \ref{thm:regret_bound_rexp3b})}

As discussed in Section~\ref{sec:regret_decomposition}, the regret $\mathcal{R}_T$ can be expressed as:
\begin{equation*}
\mathcal{R}_T = \mathcal{R}^{\text{query}}_T + \mathcal{R}^{\text{non-query}}_T = \mathcal{R}^{\text{query}}_T + \mathcal{R}^{\text{error}}_T + \mathcal{R}^{\text{drift}}_T.
\end{equation*}

The term $\mathcal{R}^{\text{query}}_T$ represents the regret incurred during the time steps when the algorithm actively queries for feedback. 
The term $\mathcal{R}^{\text{non-query}}_T$ captures the regret incurred when the algorithm decides not to query for feedback. This component can be further decomposed into two subcomponents. The first, $\mathcal{R}^{\text{error}}_T$, arises from inaccuracies in the algorithm’s decision-making when it relies on previously gathered information about the arms without further querying. 
The second subcomponent, $\mathcal{R}^{\text{drift}}_T$, accounts for the regret caused by environmental changes, such as shifts in the reward distributions, that are not promptly detected due to the absence of feedback. 

\textbf{Step1: Bounding $\mathcal{R}^{\text{query}}_T$}

The term $\mathcal{R}_T^{\text{query}}$ is the regret arising from the $\Delta_B$ query rounds in each batch. Fix $T \geq 1$, $K \geq 2$, and $V_T \in [K^{-1}, K^{-1}B]$. We partition the time horizon $T$ into a sequence of batches $\mathcal{T}_1, \dots, \mathcal{T}_m$ of size $\Delta_T$ each (except possibly the last batch). Let $\mathcal{T}^{\text{query}}_j$ denote the set of query rounds in batch $\mathcal{T}_j, j \in \{1,\dots,m\}$. We decompose the regret $\mathcal{R}^{\text{query}}_T$ as:

\begin{equation}
\begin{aligned}
\label{eq:rexp3b_ruery}
\mathcal{R}^{\text{query}}_T &= \sum_{t \in \mathcal{S}^{\text{query}}} \mu_t^* - \mathbb{E} \left[\sum_{t \in \mathcal{S}^{\text{query}}} R_t \right]
= \sum_j \mathbb{E} \left[\sum_{t \in \mathcal{T}^{\text{query}}_j} (\mu_t^* - \mu_t) \right] \\
&= \sum_j \underbrace{\sum_{t \in \mathcal{T}^{\text{query}}_j} \mu_t^* - \mathbb{E} \left[\max_{k \in [K]} \sum_{t \in \mathcal{T}^{\text{query}}_j} R_t^k \right]}_{J_{1,j}} 
+ \underbrace{\mathbb{E} \left[\max_{k \in [K]} \sum_{t \in \mathcal{T}^{\text{query}}_j} R_t^k \right] 
- \mathbb{E} \left[\sum_{t \in \mathcal{T}^{\text{query}}_j} R_t \right]}_{J_{2,j}}.
\end{aligned}
\end{equation}

Here, $J_{1,j}$ is the expected loss associated with using a single action over batch $j$, and $J_{2,j}$ is the expected regret relative to the best static action in batch $j$.
Let $V_j$ denote the total variation in expected rewards during batch $j$:
\begin{equation*}
V_j = \sum_{t \in \mathcal{T}_j} \max_{k \in [K]} |\mu_{t+1}^k - \mu_t^k|.
\end{equation*}
We note that:
\begin{equation*}
\sum_{j=1}^m V_j = \sum_{j=1}^m \sum_{t \in \mathcal{T}_j} \max_{k \in [K]} |\mu_{t+1}^k - \mu_t^k| \leq V_T.
\end{equation*}

Let $k_j$ be an arm with the best expected performance over $\mathcal{T}^{\text{query}}_j$:
\begin{equation*}
k_j \in \arg\max_{k \in [K]} \left\{ \sum_{t \in \mathcal{T}^{\text{query}}_j} \mu_t^k \right\}.
\end{equation*}
Then,
\begin{equation*}
\max_{k \in [K]} \left\{ \sum_{t \in \mathcal{T}^{\text{query}}_j} \mu_t^k \right\} = \sum_{t \in \mathcal{T}^{\text{query}}_j} \mu_t^{k_j} = \mathbb{E} \left[\sum_{t \in \mathcal{T}^{\text{query}}_j} R_t^{k_j} \right] \leq \mathbb{E} \left[\max_{k \in [K]} \left\{ \sum_{t \in \mathcal{T}^{\text{query}}_j} R_t^k \right\} \right],
\end{equation*}
For term $J_{1,j}$, we have:
\begin{equation}
\label{eq:term(J_{1,j})}
J_{1,j} = \sum_{t \in \mathcal{T}^{\text{query}}_j} \mu_t^* - \mathbb{E} \left[\max_{k \in [K]} \left\{ \sum_{t \in \mathcal{T}^{\text{query}}_j} R_t^k \right\} \right] \leq \sum_{t \in \mathcal{T}^{\text{query}}_j} \left( \mu_t^* - \mu_t^{k_j} \right) \leq 2 V_j \Delta_B.
\end{equation}
For term $J_{2,j}$, using the standard regret bound for the EXP3 algorithm in adversarial settings, we have:
\begin{equation}
\label{eq:term(J_{2,j})}
J_{2,j} = \mathbb{E} \left[ \max_{k \in [K]} \sum_{t \in \mathcal{T}^{\text{query}}_j} R_t^k \right] - \mathbb{E} \left[ \sum_{t \in \mathcal{T}^{\text{query}}_j} R_t \right] \leq 2 \sqrt{ (e - 1) \Delta_B K \log K }.
\end{equation}

We substitute \eqref{eq:term(J_{1,j})} and \eqref{eq:term(J_{2,j})} into \eqref{eq:rexp3b_ruery} and sum over all $m = \left\lceil \frac{T}{\Delta_T} \right\rceil$ batches. This yields the total regret incurred during all query rounds:
\begin{align*}
    \mathcal{R}^{\text{query}}_T &\leq \sum_{j=1}^m \left( 2 \sqrt{e-1} \sqrt{\Delta_B K \log K} + 2 V_j \Delta_B \right)\displaybreak[0]\\
    &\leq \left( \frac{T}{\Delta_T} + 1 \right) \cdot 2 \sqrt{e-1} \sqrt{\Delta_B K \log K} + 2 \Delta_B V_T.
\end{align*}

\textbf{Step2: Bounding $\mathcal{R}^{\text{non-query}}_T$}

In the remaining $\Delta_T - \Delta_B$ steps of each batch (denote them by $\mathcal{T}_j^{\text{non-query}}$), the algorithm \emph{does not} request feedback.
The main concern here is that the environment might “drift” substantially in these non-query steps without being detected. 
We start with the term $\mathcal{R}^{\text{error}}_T$.
We have:
\begin{equation*}
\begin{aligned}
\mathcal{R}^{\text{error}}_T &= \sum_{t \in \mathcal{S}^{\text{non-query}}} \mathbb{E} \left[\max_{k \in [K]} \sum_{t \in \mathcal{S}_{t-1}} \frac{R_t^k}{|\mathcal{S}_{t-1}|} \right] - \mathbb{E} \left[\sum_{t \in \mathcal{S}^{\text{non-query}}} R_t \right]\\
&= \sum_{j=1}^m \sum_{t \in \mathcal{T}_j^{\text{non-query}}} \mathbb{E} \left[\max_{k \in [K]} \sum_{t \in \mathcal{T}_j^{\text{query}}} \frac{R_t^k}{|\mathcal{T}_j^{\text{query}}|} \right] - \mathbb{E} \left[\sum_{t \in \mathcal{T}_j^{\text{non-query}}} R_t \right]\\
&= \sum_{j=1}^m \frac{|\mathcal{T}_j^{\text{non-query}}|}{|\mathcal{T}_j^{\text{query}}|} \underbrace{ \left(  \mathbb{E} \left[\max_{k \in [K]} \sum_{t \in \mathcal{T}_j^{\text{query}}} R_t^k \right] - \mathbb{E} \left[\sum_{t \in \mathcal{T}_j^{\text{query}}} R_t \right] \right) }_{J_{2,j}} \\
& \quad + \left( \frac{|\mathcal{T}_j^{\text{non-query}}|}{|\mathcal{T}_j^{\text{query}}|} \mathbb{E} \left[\sum_{t \in \mathcal{T}_j^{\text{query}}} R_t \right] - \mathbb{E} \left[\sum_{t \in \mathcal{T}_j^{\text{non-query}}} R_t \right]\right) \\
&< 2\sum_{j=1}^m \frac{\Delta_T - \Delta_B}{\Delta_B} \sqrt{ (e - 1) \Delta_B K \log K } + V_j \Delta_T\\
&\leq 2T \sqrt{ \frac{(e - 1) K \log K }{\Delta_B}} + 2V_T \Delta_T
\end{aligned}
\end{equation*}

\textbf{Explanation of the derivation:}
We decompose the error term $\mathcal{R}^{\text{error}}_T$ by initially defining it (Line~1) as the sum over non-query rounds of the difference between a hypothetical ``best average reward'' inferred from prior query feedback and the actual total reward in those non-query rounds. We then reorganize this sum by batches $j=1,\dots,m$ (Line~2), where each batch has a query part $\mathcal{T}_j^{\text{query}}$ and a non-query part $\mathcal{T}_j^{\text{non-query}}$ (Line~3). At this stage, the bracketed term $J_{2,j}$ compares the ``best possible reward sums'' in query rounds to the algorithm’s chosen sums, bounded by the adversarial regret argument. Substituting known estimates yields the expression in Line~4, where the ratio $\tfrac{\Delta_T - \Delta_B}{\Delta_B}$ reflects how many non-query rounds depend on each query segment, and $V_j$ captures environment variation within that batch. 
Then we consider $\mathcal{R}^{\text{drift}}_T$:
\begin{align*}
\mathcal{R}^{\text{drift}}_T &= \sum_{t \in \mathcal{S}^{\text{non-query}}} \mu_t^* - \sum_{t \in \mathcal{S}^{\text{non-query}}} \mathbb{E} \left[\max_{k \in [K]} \sum_{t \in \mathcal{S}_{t-1}} \frac{R_t^k}{|\mathcal{S}_{t-1}|} \right]\\
&= \sum_{j=1}^m \sum_{t \in \mathcal{T}_j^{\text{non-query}}} \mu_t^* - \mathbb{E} \left[\max_{k \in [K]} \sum_{t \in \mathcal{T}_j^{\text{query}}} \frac{R_t^k}{|\mathcal{T}_j^{\text{query}}|} \right]\displaybreak[0]\\
&\leq  2 \sum_{j=1}^{m} \left( \Delta_T - \Delta_B \right)  V_j\displaybreak[1] \\
&\leq 2 \left( \frac{T}{\Delta_T} + 1 \right) \left( \Delta_T - \Delta_B \right) \frac{V_T \cdot \Delta_T}{T}\displaybreak[2]\\
&\leq 2 \left( T + \Delta_T \right) \frac{V_T \Delta_T }{T}\leq 4 V_T \Delta_T.
\end{align*}

\textbf{Explanation of the derivation:}
We similarly define the drift term $\mathcal{R}^{\text{drift}}_T$ (Line~1) as the gap between the true best mean reward $\mu_t^*$ in each non-query round and the ``best average feedback'' constructed from query data. We again partition by batches (Line~2) and argue (Line~3) that each batch’s drift can be bounded by the product of its non-query length $(\Delta_T-\Delta_B)$ and a local measure of variation $V_j$. Accumulating over batches and simplifying (Line~4--6) shows that the drift contribution is also capped, typically at $\mathcal{O}(V_T\,\Delta_T)$. 

Therefore, the regret during the exploitation phase can be bounded as:
\begin{equation*}
\mathcal{R}^{\text{non-query}}_T = \mathcal{R}^{\text{error}}_T + \mathcal{R}^{\text{drift}}_T  \leq 6 V_T \Delta_T + 2T \sqrt{ \frac{(e - 1) K \log K }{\Delta_B}}.
\end{equation*}

\textbf{Step3: Combining and Minimizing Over \texorpdfstring{$\Delta_T$}{DeltaT}}

The total regret could be bounded as:
\begin{align*}
    \mathcal{R}_T &= \mathcal{R}^{\text{query}}_T + \mathcal{R}^{\text{non-query}}_T\\
    &= \left( \frac{T}{\Delta_T} + 1 \right) \cdot 2 \sqrt{e-1} \sqrt{\Delta_B K \log K} + 2 \Delta_B V_T + 6 V_T \Delta_T + 2T \sqrt{ \frac{(e - 1) K \log K }{\Delta_B}}.
\end{align*}

By choosing the batch size $ \Delta_T $ as
\begin{equation*}
\Delta_T = \frac{T \cdot (K \log K)^{1/3}}{B^{1/3} \cdot V_T^{2/3}},
\end{equation*}
we minimize the cumulative regret to:
\begin{equation*}
\mathcal{R}_T \leq \mathcal{O} \left( \frac{T \cdot (K \log K)^{1/3} \cdot  V_T^{1/3}}{B^{1/3}} \right).
\end{equation*}

\section{Extensions}
\label{app:extensions}

In this section, we show that our framework naturally accommodates more general non-stationary learning problems, including contextual bandits, linear bandits, and certain reinforcement learning (RL) scenarios. 
Concretely, one only needs to \emph{replace} the base algorithm (\texttt{UCB1} in our default \BQ) with a corresponding algorithm \texttt{ALG} \emph{suited to} the target setting (\emph{e.g.}, a linear bandit algorithm or a contextual bandit procedure).

\subsection{Extended Baseline Allocation: \BQ with \texttt{ALG}}

\begin{algorithm}[h!]
\caption{Extended \BQ for General Base Algorithm \texttt{ALG}}
\label{alg:baseline_allocation_extensions}
\begin{algorithmic}[1]
\Require 
Query budget ratio $b = \lceil 2T / B \rceil$, time-scale parameter $n$, non-increasing function $\rho(\cdot)$.
\For {$ \tau = 0, b - 1, 2b - 1, \dots, 2^n \cdot b - 1 $}
    \For {$ m = n, n-1, \dots, 0 $}
        \If {$ \tau $ is a multiple of $ b \cdot 2^m $}
            \State With probability $ \frac{\rho(2^n)}{\rho(2^m)} $, initiate a new instance $\mathcal{I}_{n,m,\tau}$ spanning rounds $[\tau+1,\tau + b \cdot 2^m]$;
        \EndIf
    \EndFor
\EndFor
\For{each instance $\mathcal{I}_{n,m,\tau}$}
    \State Let $\mathcal{S}_{n,m,\tau}$ be its \emph{active} rounds;
    \State \textbf{Query batch}: For the first $\max(1, \lfloor |\mathcal{S}_{n,m,\tau}|/b \rfloor)$ active rounds, run \texttt{ALG}, collecting rewards and updating the index $\tilde{g}_t$, which is $\tilde{f}_t$ in Assumption~\ref{ass:alg_guarantee}.
    \State \textbf{Non-query batch}: In the remaining active rounds,
            pick arms according to their frequencies from query batch
            (no reward feedback).
\EndFor
\end{algorithmic}
\end{algorithm}

\paragraph{Algorithm~\ref{alg:baseline_allocation_extensions} explanation.}
Compared with Algorithm~\ref{alg:baseline_allocation} in Section~\ref{alg:baseline_allocation}, we replace \texttt{UCB1} by a generic base algorithm \texttt{ALG}. The \(\rho(\cdot)\) and $\tfrac{\rho(2^n)}{\rho(2^m)}$ criteria follow from Assumption~\ref{ass:alg_guarantee}, ensuring the \emph{instance scheduling probabilities} remain consistent with the required non-stationarity measure $V$ and error margin.

\begin{remark}
Figure~\ref{fig:Multi-Scale} (earlier) remains unchanged: we still have multi-scale instances, each with a query batch and a non-query batch. Only the internal \texttt{UCB1} logic is replaced by \texttt{ALG}, which might be, \emph{e.g.}, a linear bandit algorithm or a contextual bandit procedure.
\end{remark}

\subsection{Extended Hybrid Framework: \HQ with \texttt{ALG}}

\begin{algorithm}[h!]
\caption{ Extended \HQ}
\label{alg:on_demand_allocation_extensions}
\begin{algorithmic}[1]
\State \textbf{Initialize:} current round $t \leftarrow 1$, used queries $B' \leftarrow 0$.
\For{$ n = 0, 1, \dots $}
    \State Set $ t_n \gets t $ and initialize \BQ for block $ [t_n, t_n + b \cdot 2^n - 1] $ with the time-scale parameter $n$;
    \While{$ t < t_n + b \cdot 2^n $}
        \If{current instance has queries}
            \State Receive index $\tilde{g}_t$ and selected arm from \BQ, play it, increment $t$ and $B'$;
            \State Update \BQ instance with the observed feedback;
            \State Perform environmental change tests. If any test fails, restart a new phase from \emph{Line 2};
        \Else
            \If{$B' < \frac{t \cdot B}{T} - \min\{T/\sqrt{B}, 2^n, T - t\}$}
                \State Convert the current non-query round into a query round and jump to \emph{Line 6};
            \EndIf
            \State No feedback requested; $t \gets t+1$.
        \EndIf
    \EndWhile
\EndFor
\end{algorithmic}
\end{algorithm}

\paragraph{Algorithm~\ref{alg:on_demand_allocation_extensions} explanation.}
This is a direct generalization of Algorithm~\ref{alg:on_demand_allocation} in the main text, except we embed \texttt{ALG} inside \BQ. The environment-change tests remain the same, or adapt to \texttt{ALG}’s specific estimates of reward. The on-demand logic remains: if the total used queries $B'$ is far below $\tfrac{t\,B}{T}$, we convert that round into a query.

\subsection{General Conditions on the Base Algorithm \texttt{ALG}}

The following assumption from \citet{conf/colt/WeiL21} ensure that \texttt{ALG} yields upper confidence estimates for the optimal reward under bounded non-stationarity.

\begin{assumption}[\citealp{conf/colt/WeiL21}]
\label{ass:alg_guarantee}
\texttt{ALG} outputs an auxiliary quantity $ \tilde{f}_t \in [0, 1] $ at each time step $ t $. There exists a non-stationarity measure $ V $ and a non-increasing function $ \rho : [T] \to \mathbb{R} $ such that when running \texttt{ALG}, the following conditions hold for all $ t \in [T] $ provided that $ V_{[1,t]} \leq \rho(t) $:
\begin{equation*}
\begin{aligned}
    \tilde{f}_t \geq \min_{\tau \in [1,t]} f^\star_\tau - V_{[1,t]}, \qquad
    \frac{1}{t} \sum_{\tau=1}^t (\tilde{f}_\tau - r_\tau) &\leq \rho(t) + V_{[1,t]}.
\end{aligned}
\end{equation*}
This guarantees hold with probability at least $ 1 - \frac{\delta}{T} $. Additionally, $ \rho(t) \geq \frac{1}{\sqrt{t}} $ and $ C(t) = t \rho(t) $ is a non-decreasing function.
\end{assumption}

This assumption ensures that the base algorithm (\texttt{ALG}) provides reliable estimates of the optimal reward and maintains a controlled discrepancy between the auxiliary outputs and the actual rewards. The function $ \rho(t) $ captures the uncertainty or error margin, which decreases over time.

\paragraph{Implication.}
Under Assumption~\ref{ass:alg_guarantee}, the entire analysis remains the same, simply replacing \texttt{UCB1} with \texttt{ALG}, and ensuring the scheduling probabilities scale via $\rho(\cdot)$ as indicated. This yields a near-optimal dynamic regret for a broader class of non-stationary settings, matching the idea of the \texttt{MASTER} algorithm \citep{conf/colt/WeiL21}, but now \emph{budgeted} by $B$ queries thanks to our hybrid query method.
\section{Proof of Theorem \ref{thm:lower_bound_non_stationary_budgeted_feedback}}

\label{app:lowerbound}

We construct a family of problem instances and analyze the regret that any algorithm must incur on these instances.

\textbf{Step 1: Batching the Horizon and Constructing Reward Means}

Partition the time horizon $ T $ into $ m = \left\lceil \frac{T}{\Delta} \right\rceil $ batches, each of size $\Delta$ (except possibly the last). Denote
\begin{equation*}
\mathcal{T}_j = \left\{ t : (j-1)\Delta + 1 \leq t \leq \min\{ j\Delta, T \} \right\}, \quad j = 1, \ldots, m.
\end{equation*}
We will choose $ \Delta $ later to balance the terms in the lower bound.
For a small $0 < \varepsilon \le \tfrac14 $ (also determined later), define a family $\mathcal{V}'$ of reward sequences such that:

\begin{itemize}
    \item  $ \mu_t^k \in \left\{ \frac{1}{2}, \frac{1}{2} + \varepsilon \right\} $ for all $ k \in [K] $, $ t \in [T] $;
    \item In each batch $\mathcal{T}_j$, exactly one ``good'' arm $k_j$ has mean $\tfrac12+\varepsilon$, while all others remain at $\tfrac12$;
    \item  These means are constant within each batch (no within-batch variation).
\end{itemize}

The total variation over time for any $ \mu \in \mathcal{V}' $ is:
\begin{equation*}
\sum_{t=1}^{T-1} \sup_{k} \left| \mu_t^k - \mu_{t+1}^k \right| = (m - 1) \varepsilon \leq \frac{T \varepsilon}{\Delta}.
\end{equation*}
Hence if we set 
\begin{equation*}
    \varepsilon \leq \frac{V_T\,\Delta}{T},
\end{equation*}
then $\mathcal{V}'\subseteq\mathcal{V}$, ensuring all such sequences are valid under the variation budget $V_T$.

\textbf{Step 2: Single-Batch Analysis under a Feedback Budget}

In each batch $\mathcal{T}_j$, we analyze the expected regret under the constraint of the corresponding query budget $B_j$ allocated to this batch.
For any algorithm $\mathcal{A}$, define:
\begin{itemize}
    \item $ n_j(k) = \mathbb{E}\bigl[\text{(\# times arm $k$ is pulled in batch $j$)}\bigr] $;
    \item $ n_j^q(k) = \mathbb{E}\bigl[\text{(\# times feedback of arm $k$ is actually observed in batch $j$)}\bigr] $.
\end{itemize}
Because the total feedback across \emph{all} batches is at most $B$, we have
\begin{equation*}
\sum_{j=1}^m \sum_{k \in [K]} \mathbb{E}[ n_j^q(k) ] \leq \sum_{j=1}^m B_j \leq B.
\end{equation*}
We aim to lower bound the regret in each batch $ \mathcal{T}_j $, accounting for the limited feedback.
Let $\nu$ be an instance with all arms having mean $\tfrac12$, and let $\nu'$ differ only by giving arm $k_j$ a mean of $(\tfrac12+\varepsilon)$ in batch $j$.
Let $\mathbb{P}_\nu$ and $\mathbb{P}_{\nu'}$ be the probability measures (over entire batch’s observation process) under $\nu$ and $\nu'$, respectively. The Kullback-Leibler divergence satisfies
\begin{equation*}
    \mathrm{KL}(\mathbb{P}_\nu,\mathbb{P}_{\nu'})
    \;\le\;
    \mathbb{E}_\nu\bigl[n_j^q(k_j)\bigr]
    \cdot
    \mathrm{KL}\Bigl(\tfrac12,\;\tfrac12+\varepsilon\Bigr).
\end{equation*}
Here $\mathrm{KL}\bigl(\tfrac12,\tfrac12+\varepsilon\bigr)\le 2\,\varepsilon^2$ for $\varepsilon\le \tfrac14$. By Pinsker’s inequality,
\begin{equation*}
    \bigl|\mathbb{E}_\nu[f] - \mathbb{E}_{\nu'}[f]\bigr|
    \;\le\;
    \|f\|_{\infty}\,\sqrt{\tfrac12\, \mathrm{KL}(\mathbb{P}_\nu,\mathbb{P}_{\nu'})},
\end{equation*}
where $\|f\|_\infty$ is the maximum absolute value of $f$. Taking $f = n_j(k_j)$, which is clearly at most $\Delta$ in magnitude (the arm cannot be pulled more than $\Delta$ times in that batch), we get
\begin{equation*}
    \bigl|
    \mathbb{E}_\nu[n_j(k_j)] - \mathbb{E}_{\nu'}[n_j(k_j)]
    \bigr|
    \;\le\;
    \Delta \,\sqrt{
        \tfrac12 \,\mathrm{KL}(\mathbb{P}_\nu,\mathbb{P}_{\nu'}) 
    }
    \;\le\;
    \Delta\,\varepsilon \,\sqrt{2\,\mathbb{E}_\nu[n_j^q(k_j)]}.
\end{equation*}
Then we have
\begin{equation}
\label{eq:change-of-measure}
    \mathbb{E}_{\nu'}[n_j(k_j)]
    \le
    \mathbb{E}_\nu[n_j(k_j)]
    +
    \Delta \,\varepsilon \,\sqrt{
        2\,\mathbb{E}_\nu[n_j^q(k_j)]
    }.
\end{equation}
To further bound $\mathbb{E}_{\nu'}[n_j(k_j)]$, we use the following result:
\begin{lemma}[\citet{efroni2021confidence}]
\label{lemma: multiple conditions}
Let $ x, y \in \mathbb{R}^n_+ $ for some $ n \ge 2 $, and assume that $ \sum_{i=1}^n x_i \le X $ and $ \sum_{i=1}^n y_i \le Y $. Then, for any $ \alpha \in (1, 2) $ and $ \beta \ge \frac{3\alpha}{\alpha - 1} $, there exists an index $ k \in [n] $ such that $ x_k \le \frac{\alpha X}{n} $ and $ y_k \le \frac{\beta Y}{n} $ simultaneously.
\end{lemma}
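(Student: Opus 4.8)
The plan is to prove the lemma by a direct counting (union-bound) argument on the two ``bad'' index sets, showing that they cannot together exhaust all of $[n]$, so that some index escapes both.

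First I would define the bad sets
\[
A = \Bigl\{ i \in [n] : x_i > \tfrac{\alpha X}{n} \Bigr\},
\qquad
B = \Bigl\{ i \in [n] : y_i > \tfrac{\beta Y}{n} \Bigr\}.
\]
An index $k$ satisfying both desired inequalities $x_k \le \tfrac{\alpha X}{n}$ and $y_k \le \tfrac{\beta Y}{n}$ is precisely an index lying in $[n] \setminus (A \cup B)$; hence it suffices to establish $|A \cup B| < n$. Next I would bound each bad set by a Markov-type averaging step. Assuming $X > 0$ (the case $X = 0$ forces every $x_i = 0$, so $A = \emptyset$ and the bound is trivial), summing the defining inequality of $A$ over its members gives $|A| \cdot \tfrac{\alpha X}{n} < \sum_{i \in A} x_i \le \sum_{i=1}^n x_i \le X$, and dividing through yields $|A| < n/\alpha$. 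The identical argument applied to $y$, $Y$, and $\beta$ gives $|B| < n/\beta$.

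I would then combine these with subadditivity of cardinality, $|A \cup B| \le |A| + |B| < n\bigl(\tfrac1\alpha + \tfrac1\beta\bigr)$, and verify the arithmetic linking the hypotheses to the condition $\tfrac1\alpha + \tfrac1\beta \le 1$. Since $\alpha > 1$ (from $\alpha \in (1,2)$) the quantity $\alpha - 1$ is positive, so $\beta \ge \tfrac{3\alpha}{\alpha-1}$ gives $\tfrac1\beta \le \tfrac{\alpha-1}{3\alpha} \le \tfrac{\alpha-1}{\alpha} = 1 - \tfrac1\alpha$, whence $\tfrac1\alpha + \tfrac1\beta \le 1$. Consequently $|A \cup B| < n$, and because $|A \cup B|$ is a non-negative integer it is at most $n-1$, leaving at least one index $k \in [n] \setminus (A \cup B)$. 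This $k$ satisfies both inequalities simultaneously, completing the proof.

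As for difficulty: there is essentially no analytic obstacle here — the result is a two-set pigeonhole argument and the only steps needing care are bookkeeping. Specifically, I would track the \emph{strict} inequalities $|A| < n/\alpha$ and $|B| < n/\beta$, since it is exactly this strictness that lets the boundary case $\tfrac1\alpha + \tfrac1\beta = 1$ still produce $|A \cup B| < n$; and I would dispatch the degenerate cases $X = 0$ or $Y = 0$ (equivalently $A$ or $B$ empty) separately. I note that the hypothesis $\beta \ge \tfrac{3\alpha}{\alpha-1}$ is in fact stronger than the minimal requirement $\beta \ge \tfrac{\alpha}{\alpha-1}$ that this argument uses, so the factor-$3$ slack gives comfortable room, and the upper bound $\alpha < 2$ is not needed for this argument — only $\alpha > 1$ is essential.
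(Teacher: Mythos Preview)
Your argument is correct: the two-set Markov/pigeonhole bound $|A|<n/\alpha$, $|B|<n/\beta$ together with $\tfrac1\alpha+\tfrac1\beta\le 1$ (which follows already from the weaker $\beta\ge\tfrac{\alpha}{\alpha-1}$, as you observe) forces $|A\cup B|<n$ and hence yields the desired index. Your handling of the degenerate cases $X=0$ or $Y=0$ and of the strictness needed at the boundary $\tfrac1\alpha+\tfrac1\beta=1$ is also fine.

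There is nothing to compare against here: the paper does not supply its own proof of this lemma but simply quotes it from \citet{efroni2021confidence} and invokes it as a black box inside the lower-bound construction. Your proof is the natural one and is essentially what appears in that reference; your remarks that the factor $3$ in the hypothesis on $\beta$ and the upper bound $\alpha<2$ are both unnecessary for the argument are accurate.
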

Let $ \alpha = \frac{5}{4} $, $ \beta = 15 $, $ x_k = \mathbb{E}_\nu[ n_j(k) ] $ and $ y_k = \mathbb{E}_\nu[ n_j^q(k) ] $ for $ k \in [K] $. Then,
\begin{equation*}
\sum_{k \in [K]} x_k \leq \Delta, \quad \sum_{k \in [K]} y_k \leq B_j.
\end{equation*}
By Lemma \ref{lemma: multiple conditions}, there exists an arm $ k_j $ such that:
\begin{equation*}
x_{k_j} \leq \frac{\alpha \Delta}{K} = \frac{5 \Delta}{4 K}, \quad y_{k_j} \leq \frac{\beta B_j}{K} = \frac{15 B_j}{K}.
\end{equation*}
Substituting the above two terms into \eqref{eq:change-of-measure}, we have
\begin{equation*}
\mathbb{E}_{\nu'}[ n_j(k_j) ] \leq \frac{5 \Delta}{4 K} + \Delta \varepsilon \sqrt{ \frac{15 B_j}{K} }.
\end{equation*}

Let $\mathcal{R}_j$ denote the expected regret in batch $j$. Under $\nu'$, the expected regret in batch $j$ is at least:
\begin{equation*}
    \mathcal{R}_j \geq \varepsilon \left(\Delta - \mathbb{E}_{\nu'}[n_j(k_j)]\right).
\end{equation*}

Therefore,
\begin{equation}
\label{eq:batch_regret_bound}
\mathcal{R}_j \geq \varepsilon \left( \Delta - \frac{5 \Delta}{4 K} - \Delta \varepsilon \sqrt{ \frac{30 B_j}{K} } \right) = \Delta \varepsilon \left( 1 - \frac{5}{4 K} - \varepsilon \sqrt{ \frac{30 B_j}{K} } \right).
\end{equation}

\textbf{Step 3: Summing Regret over $m$ Batches}

We have $m = \lceil T/\Delta \rceil$ batches in total. Summing \eqref{eq:batch_regret_bound} over $j=1,\dots,m$ yields:
\begin{equation}
\begin{aligned}
\label{eq:sum_lb}
    \mathcal{R}_T(\mathcal{A})=\sum_{j=1}^m \mathcal{R}_j &\ge \sum_{j=1}^m \Delta\varepsilon \left(1-\frac{5}{4K}-\varepsilon\sqrt{\frac{30B_j}{K}}\right)\\
    &\ge (m-1) \Delta \varepsilon \left( 1 - \frac{5}{4 K} - \varepsilon \sqrt{ \frac{30 B \Delta }{ K T }  } \right)\\
    &\ge \frac12 T \varepsilon \left( 1 - \frac{5}{4 K} - \varepsilon \sqrt{ \frac{30 B \Delta }{ K T }  } \right).
\end{aligned}
\end{equation}
We choose $ \varepsilon $ and $\Delta$ such that:
\begin{equation*}
\varepsilon = \frac{V_T \Delta}{T}, \quad \varepsilon \sqrt{ \frac{30 B \Delta }{ K T } } = \frac{1}{8}.
\end{equation*}
Recall that $ \varepsilon = \frac{V_T \Delta}{T} $. Substituting, we have:
\begin{equation*}
\frac{V_T \Delta}{T} \sqrt{ \frac{30 B \Delta }{ K T } } = \frac{1}{8}.
\end{equation*}
Therefore,
\begin{equation*}
\Delta = \left( \frac{ K T^3 }{ 1920 V_T^2 B } \right )^{1/3}, \quad \varepsilon = \frac{ V_T \Delta }{ T } = \frac{ V_T }{ T } \left( \frac{ K T^3 }{ 1920 V_T^2 B } \right )^{1/3} =  \frac{ K^{1/3} V_T^{1/3} }{ 1920^{1/3} B^{1/3}  } .
\end{equation*}
Since $ K \geq 2 $, $ \frac{5}{4 K} \leq \frac{5}{8} $, so:
\begin{equation*}
1 - \frac{5}{4 K} - \frac{1}{8} \geq 1 - \frac{5}{8} - \frac{1}{8} = \frac{1}{4}.
\end{equation*}
We substitute these $\Delta,\varepsilon$ back into \eqref{eq:sum_lb}:
\begin{equation*}
\mathcal{R}_T(\mathcal{A}) \geq \frac{ T \varepsilon }{ 8 } = \frac{ T }{ 8 } \left( \frac{ K^{1/3} V_T^{1/3} }{ 1920^{1/3} B^{1/3}  } \right ) = \frac{ 1 }{ 8 \cdot 1920^{1/3} } \frac{ K^{1/3} V_T^{1/3} T }{ B^{1/3}  } .
\end{equation*}
This shows that any algorithm $\mathcal{A}$ limited to $B$ queries suffers $\Omega\!\left(\frac{K^{1/3} V_T^{1/3} T}{B^{1/3}}\right)$ 
regret in the worst case, completing the proof.


\end{document}